\documentclass{article}



\usepackage[preprint]{neurips_2018}


\pdfoutput=1
\usepackage[utf8]{inputenc} 
\usepackage[T1]{fontenc}    
\usepackage{hyperref}       
\usepackage{url}            
\usepackage{booktabs}       
\usepackage{amsfonts}       
\usepackage{nicefrac}       
\usepackage{microtype}      

\usepackage{comment}
\usepackage{amsfonts}
\usepackage{mathtools}
\usepackage{amsthm}
\usepackage{times}
\usepackage{graphicx} 
\usepackage{subfigure} 

\usepackage{algorithm}

\usepackage{algorithmic}

\newtheorem{theorem}{Theorem}
\newtheorem{proposition}{Proposition}
\newtheorem{lemma}{Lemma}
\newenvironment{sproof}{%
  \proof}{\endproof}
\newcommand{\argmin}{\operatornamewithlimits{argmin}}

\title{Online convex optimization for cumulative constraints}

%

\author{%
  Jianjun Yuan\\
  Department of Electrical and Computer Engineering\\
  University of Minnesota\\
  Minneapolis, MN, 55455 \\
  \texttt{yuanx270@umn.edu} \\
  \And
  Andrew Lamperski \\
  Department of Electrical and Computer Engineering\\
  University of Minnesota\\
  Minneapolis, MN, 55455 \\
  \texttt{alampers@umn.edu} \\
}

\begin{document}

\maketitle

\begin{abstract}
  We propose the algorithms for online convex
  optimization which lead to cumulative squared constraint violations
  of the form
  $\sum\limits_{t=1}^T\big([g(x_t)]_+\big)^2=O(T^{1-\beta})$, where
  $\beta\in(0,1)$
  .  Previous literature has
  focused on long-term constraints of the form
  $\sum\limits_{t=1}^Tg(x_t)$. There, strictly feasible solutions
  can cancel out the effects of violated constraints.
  In contrast, the new form heavily penalizes large constraint
  violations and cancellation effects cannot occur. 
  Furthermore, useful bounds on the single step constraint violation
  $[g(x_t)]_+$ are derived.
  For convex objectives, our regret bounds generalize
  existing bounds, and for strongly convex objectives we give improved
  regret bounds.
  In numerical experiments, we show that our algorithm closely follows
  the constraint boundary leading to low cumulative violation. 
\end{abstract}

\section{Introduction}

Online optimization is a popular framework for machine learning, with
applications such as  dictionary learning \cite{mairal2009online},
auctions \cite{blum2004online}, classification, and regression
\cite{crammer2006online}.
It has also been influential in the development of 
algorithms in  deep learning such as convolutional neural networks \cite{lecun1998gradient},
deep Q-networks \cite{mnih2015human}, and reinforcement learning \cite{fazel2018global,yuan2017online}.

The general formulation for online convex optimization (OCO)
is as follows: At each time t, we choose a vector $x_t$ in convex set
$S = \{x: g(x)\le 0\}$.
Then we receive a loss function $f_t:S\to R$ drawn from a family
of convex functions and we obtain the loss $f_t(x_t)$. 
In this general setting, there is no constraint on how the sequence of
loss functions $f_t$ is generated.
See \cite{zinkevich2003online} for more details.

The goal is to generate a sequence of $x_t\in S$ for $t = 1,2,..,T$
to minimize the cumulative regret which is defined by:
\begin{equation}
\label{eq::general online}
Regret_T(x^*) = \sum\limits_{t=1}^Tf_t(x_t) - \sum\limits_{t=1}^Tf_t(x^*)
\end{equation}
where $x^*$ is the optimal solution to the following problem:
$\min\limits_{x\in S} \sum\limits_{t=1}^Tf_t(x)$.
According to \cite{cesa2006prediction}, 
the solution to Problem (\ref{eq::general online}) is called Hannan
consistent if $Regret_T(x^*)$ is sublinear 
in $T$. 

For online convex optimization with constraints, a projection operator
is typically applied to the updated variables
in order to make them feasible at each time step \cite{zinkevich2003online,duchi2008efficient,duchi2010composite}.
However, when the constraints are complex, 
the computational burden of the projection may be too high
for online computation.
To circumvent this
dilemma, \cite{mahdavi2012trading} proposed an
algorithm which approximates the true desired projection with a
simpler  closed-form projection. The algorithm gives a cumulative
regret $Regret_T(x^*)$ which is upper bounded by $O(\sqrt{T})$,
but the constraint $g(x_t) \le 0$ may not be satisfied in every time
step.
Instead, the long-term constraint violation satisfies $\sum\limits_{t=1}^Tg(x_t)\le O(T^{3/4})$,
which is useful when we only require the constraint violation to be
non-positive on average: $\lim_{T\to \infty}\sum\limits_{t=1}^Tg(x_t)/T \le 0$.

More recently, \cite{jenatton2016adaptive} proposed an adaptive stepsize version of this algorithm
which can make $Regret_T(x^*)\le O(T^{\max\{\beta,1-\beta\}})$ and 
$\sum\limits_{t=1}^Tg(x_t)\le O(T^{1-\beta/2})$. Here $\beta\in(0,1)$
is a user-determined trade-off parameter.
In related work,
\cite{yu2017online} provides another algorithm
which achieves $O(\sqrt{T})$ regret 
and a bound of $O(\sqrt{T})$ on the long-term constraint violation. 

In this paper, we propose two algorithms for the following two different cases:

\textbf{Convex Case:} The first algorithm is for the convex case,
which also has the user-determined trade-off
as in \cite{jenatton2016adaptive}, while the constraint violation is more strict.
Specifically, we have $Regret_T(x^*)\le O(T^{\max\{\beta,1-\beta\}})$ 
and $\sum\limits_{t=1}^T\big([g(x_t)]_+\big)^2 \le O(T^{1-\beta})$
where $[g(x_t)]_+ =\max\{0,g(x_t)\}$ and  
$\beta\in(0,1)$.
Note the square term heavily penalizes large constraint violations and
constraint violations from one step cannot be canceled out by strictly
feasible steps. 
Additionally, we give a bound on the cumulative
constraint violation 
$\sum\limits_{t=1}^T[g(x_t)]_+ \le O(T^{1-\beta/2})$, which
generalizes the bounds from \cite{mahdavi2012trading,jenatton2016adaptive}. 

In the case of $\beta = 0.5$, which we call "balanced", 
both $Regret_T(x^*)$ and 
$\sum\limits_{t=1}^T([g(x_t)]_+)^2$ have the same upper bound of $O(\sqrt{T})$.
More importantly,
our algorithm guarantees that at each time step, 
the clipped constraint term $[g(x_t)]_+$ is upper bounded by $O(\frac{1}{T^{1/6}})$,
which does not follow from the results of \cite{mahdavi2012trading,jenatton2016adaptive}.
However, our results currently cannot generalize those of
\cite{yu2017online}, 
which has $\sum\limits_{t=1}^Tg(x_t)\le O(\sqrt{T})$. As discussed
below, it is unclear how to extend the work of \cite{yu2017online} to
the clipped constraints, $[g(x_t)]_+$.

\textbf{Strongly Convex Case:} Our second algorithm for strongly convex function $f_t(x)$
gives us the improved upper bounds compared with the previous work in \cite{jenatton2016adaptive}.
Specifically, we have $Regret_T(x^*)\le O(\log(T))$, and
$\sum\limits_{t=1}^T[g(x_t)]_+ \le O(\sqrt{\log(T)T})$. 
The improved bounds match the regret order of standard OCO from
\cite{hazan2007logarithmic}, while maintaining a constraint violation of reasonable order.

We show numerical experiments on three problems. A toy example is used
to compare trajectories of our algorithm with those of
\cite{jenatton2016adaptive,mahdavi2012trading}, and we see that our
algorithm tightly follows the constraints. The algorithms are
also compared on a doubly-stochastic matrix approximation problem
\cite{jenatton2016adaptive} and an economic dispatch problem from
power systems. In these, our algorithms lead to reasonable objective
regret and low cumulative constraint violation.

\section{Problem Formulation }


The basic projected gradient algorithm for
Problem~\eqref{eq::general online} was defined in \cite{zinkevich2003online}. 
At each step,
$t$, the algorithm takes a gradient step with respect to $f_{t}$ and
then projects onto the feasible set.
With some assumptions on $S$ and $f_t$, this algorithm achieves a regret of $O(\sqrt{T})$.

Although the algorithm is simple,  
it needs to solve
a constrained optimization problem at every time step, which might be too time-consuming
for online implementation
when the constraints are complex. 
Specifically, in \cite{zinkevich2003online}, at each iteration $t$, the update rule is:
\begin{equation}
\label{eq::ogd_2003}
\begin{array}{lll}
x_{t+1} &= \Pi_S(x_t-\eta \nabla f_t(x_t)) 
& = \arg \min\limits_{y\in S}\left\|y-(x_t-\eta \nabla f_t(x_t))\right\|^2
\end{array}
\end{equation}
where $\Pi_S$ is the projection operation to the set $S$ and
$\left\|\quad\right\|$ is the $\ell_2$ norm.
%

In order to lower the computational complexity and accelerate the online processing speed,
the work of \cite{mahdavi2012trading} avoids the convex optimization by projecting the variable to a fixed ball $S\subseteq\mathcal{B}$,
which always has a closed-form solution.
That paper gives an online solution for the following problem:
\begin{equation}
\label{eq::original long term version}
\begin{array}{llll}
\underset{x_1,\ldots,x_T\in \mathcal{B}}{\min} & \sum\limits_{t=1}^T f_t(x_t) -
                                    \min\limits_{x\in
                                    S}\sum\limits_{t=1}^T f_t(x)
&s.t. & \sum\limits_{t=1}^T g_i(x_t)\le 0, i = 1,2,...,m 
\end{array}
\end{equation}
where $S = \{x: g_i(x)\le 0, i=1,2,...,m \} \subseteq \mathcal{B}$. It
is assumed that there exist constants $R>0$ and $r<1$ such that
$r\mathbb{K}\subseteq S \subseteq R\mathbb{K}$ with $\mathbb{K}$ being
the unit $\ell_2$ ball centered at the origin and $\mathcal{B} = R\mathbb{K}$.

Compared to Problem (\ref{eq::general online}), which requires that
$x_t \in S$ for all $t$, \eqref{eq::original long term version} implies that 
 only the sum of constraints is
 required. This sum of constraints is known as the \emph{long-term
   constraint}. 
 
To solve this new problem, \cite{mahdavi2012trading} considers the following augmented Lagrangian function at each iteration $t$:
\begin{equation}
\label{eq::pre_l_t}
\mathcal{L}_t(x,\lambda) = f_t(x) + \sum\limits_{i=1}^m \Big\{ \lambda_ig_i(x) - \frac{\sigma \eta}{2}\lambda_i^2 \Big\}
\end{equation}

The update rule is as follows:
\begin{equation}
\label{eq::original update rule}
\begin{array}{ll}
x_{t+1} = \Pi_{\mathcal{B}}(x_t-\eta\nabla_x \mathcal{L}_t(x_t,\lambda_t) ), &
\lambda_{t+1} = \Pi_{[0,+\infty)^m}(\lambda_t + \eta\nabla_{\lambda} \mathcal{L}_t(x_t,\lambda_t) )
\end{array}
\end{equation}
where $\eta$ and $\sigma$ are the pre-determined stepsize and some constant, respectively. 


More recently, an adaptive version was developed in \cite{jenatton2016adaptive},
which has a user-defined trade-off
parameter. The algorithm proposed by \cite{jenatton2016adaptive}
utilizes two different stepsize sequences
to update $x$ and $\lambda$, respectively,  
instead of using a single stepsize $\eta$.

In both algorithms of \cite{mahdavi2012trading} and
\cite{jenatton2016adaptive}, the bound for the violation of the
long-term constraint is that  
$\forall i$, $\sum\limits_{t=1}^T g_i(x_t)\le O(T^{\gamma})$ for some $\gamma \in (0,1)$. 
However, as argued in the last section,
this bound does not enforce that the violation of the constraint $x_t
\in S$ gets small. A situation can arise in which strictly satisfied
constraints at one time step can cancel out violations of the
constraints at other time steps. 
This problem can be rectified by considering clipped constraint, $[g_i(x_t)]_+$, in place
of $g_i(x_t)$.

For convex problems,  
our goal is to bound the term $\sum\limits_{t=1}^T \big([g_i(x_t)]_+\big)^2$,
which, as discussed in the previous section, is more useful for
enforcing small constraint violations,
and also recovers the existing bounds for both $\sum\limits_{t=1}^T [g_i(x_t)]_+$ and $\sum\limits_{t=1}^T g_i(x_t)$.
For strongly convex problems, we also show the improvement on the upper bounds
compared to the results in \cite{jenatton2016adaptive}.

In sum,
in this paper, we want to solve the following problem for the general convex condition:
\begin{equation}
\label{eq::new long term problem}
\begin{array}{llll}
\min\limits_{x_1,x_2,...,x_T\in\mathcal{B}} & \sum\limits_{t=1}^T
                                                f_t(x_t) -
                                                \min\limits_{x\in
                                                S}\sum\limits_{t=1}^T
                                                f_t(x) 
&\quad \quad s.t. & \sum\limits_{t=1}^T \big([g_i(x_t)]_+\big)^2\le O(T^\gamma),
       \forall i 
\end{array}
\end{equation}
where $\gamma \in (0,1)$. The new constraint from
\eqref{eq::new long term problem} is called the \emph{square-clipped
 long-term constraint} (since it is a square-clipped version of the
long-term constraint) or \emph{square-cumulative constraint} (since it
encodes the square-cumulative violation of the constraints).

To solve Problem (\ref{eq::new long term problem}), we change the augmented Lagrangian function $\mathcal{L}_t$ as follows:
\begin{equation}
  \label{eq::new long term lagrangian}
\mathcal{L}_t(x,\lambda) = f_t(x) + \sum\limits_{i=1}^m \Big\{ \lambda_i[g_i(x)]_+ - \frac{\theta_t}{2}\lambda_i^2 \Big\}
\end{equation}



In this paper, we will use the following assumptions as in \cite{mahdavi2012trading}:
1. The convex set $S$ is non-empty, closed, bounded, and can be described by $m$ convex functions as $S = \{x: g_i(x)\le 0, i =1,2,...,m \}$.
2. Both the loss functions $f_t(x)$, $\forall t$ and constraint functions $g_i(x)$, $\forall i$ are Lipschitz continuous in the set $\mathcal{B}$.
That is, $\left\|f_t(x) - f_t(y)\right\| \le L_f\left\|x-y\right\|$, $\left\|g_i(x) - g_i(y)\right\| \le L_g\left\|x-y\right\|$,
$\forall x,y\in \mathcal{B}$ and $\forall t,i$. $G=\max\{L_f,L_g\}$, and 
\begin{equation*}
\begin{array}{ll}
F = \max\limits_{t=1,2,...,T}\max\limits_{x,y\in\mathcal{B}} f_t(x) -f_t(y)\le 2L_fR,  &
D = \max\limits_{i=1,2,...,m}\max\limits_{x\in\mathcal{B}}g_i(x)\le L_gR
\end{array}
\end{equation*}

\section{Algorithm}

\subsection{Convex Case:}

\begin{algorithm}[tb]
    \caption{Generalized Online Convex Optimization with Long-term Constraint}
    \label{alg::alg1}
\begin{algorithmic}[1]
    \STATE {\bfseries Input:} constraints $g_i(x)\le 0,i=1,2,...,m$, stepsize $\eta$, time horizon T, and constant $\sigma>0$.
    \STATE {\bfseries Initialization:} $x_1$ is in the center of the $\mathcal{B}$ .
    \FOR{$t=1$ {\bfseries to} $T$}
    \STATE Input the prediction result $x_t$.
    \STATE Obtain the convex loss function $f_t(x)$ and the loss value $f_t(x_t)$.
    \STATE Calculate a subgradient $\partial_x
    \mathcal{L}_t(x_t,\lambda_t)$, where:
    \begin{equation*}
    \begin{array}{ll}
     \partial_x \mathcal{L}_t(x_t,\lambda_t) = \partial_x f_t(x_t) + \sum\limits_{i=1}^m \lambda_t^i\partial_x ([g_i(x_t)]_+),
     &\partial_x ([g_i(x_t)]_+) =
     \begin{cases}
     0, \quad\mbox{$g_i(x_t) \le0$}\\
     \nabla_x g_i(x_t), \mbox{otherwise}\\
     \end{cases}
    \end{array}
    \end{equation*}
    \STATE Update $x_t$ and $\lambda_t$ as below:
    \begin{equation*}
    \begin{array}{ll}
    x_{t+1} = \Pi_{\mathcal{B}}(x_t-\eta \partial_x \mathcal{L}_t(x_t,\lambda_t)),
    \lambda_{t+1} = \frac{[g(x_{t+1})]_+}{\sigma \eta}
    \end{array}
    \end{equation*}
    \ENDFOR
\end{algorithmic}
\end{algorithm}

The main algorithm for this paper is shown in Algorithm \ref{alg::alg1}.
For
simplicity, we abuse the subgradient notation, denoting a single
element of the subgradient by $\partial_x \mathcal{L}_t(x_t,\lambda_t)$.
Comparing our algorithm with Eq.(\ref{eq::original update rule}), we
can see that the gradient projection step for $x_{t+1}$ is similar, while the update
rule for $\lambda_{t+1}$ is different.
Instead of a projected gradient step, we explicitly maximize
$\mathcal{L}_{t+1}(x_{t+1},\lambda)$ over $\lambda$.
This explicit projection-free update for $\lambda_{t+1}$ is possible
because the constraint clipping guarantees that the maximizer is non-negative. 
Furthermore, this constraint-violation-dependent update helps to
enforce small cumulative and individual constraint
violations. Specific bounds on constraint violation are given in
Theorem \ref{thm::sumOfSquareLongterm} and Lemma \ref{lem:bound_step} below.

Based on the update rule in Algorithm \ref{alg::alg1},
the following theorem gives the upper bounds for both the regret on the loss
and the squared-cumulative constraint violation, $\sum\limits_{t=1}^T\Big([g_i(x_t)]_+\Big)^2$ in Problem \ref{eq::new long term problem}. 
For space purposes, all proofs are contained in the
supplementary material. 

\begin{theorem}
\label{thm::sumOfSquareLongterm}
{\it
Set $\sigma = \frac{(m+1)G^2}{2(1-\alpha)}$, 
$\eta = \frac{1}{G\sqrt{(m+1)RT}}$. 
If we follow the update rule in Algorithm \ref{alg::alg1} with $\alpha\in(0,1)$
and $x^*$ being the optimal solution for $\min\limits_{x\in S}\sum\limits_{t=1}^Tf_t(x)$, 
we have
\begin{equation*}
\begin{array}{ll}
\sum\limits_{t=1}^T\Big( f_t(x_t) - f_t(x^*)\Big)\le O(\sqrt{T}),& 
\sum\limits_{t=1}^T\Big([g_i(x_t)]_+\Big)^2 \le O(\sqrt{T}), \forall i \in \{1,2,...,m\}
\end{array}
\end{equation*}
}
\end{theorem}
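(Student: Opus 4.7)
The plan is to analyze Algorithm~\ref{alg::alg1} via the standard Lagrangian-based regret decomposition, and then exploit the closed-form update $\lambda_{t+1}=[g(x_{t+1})]_+/(\sigma\eta)$ to turn every dependence on $\lambda_t$ into a contribution of the cumulative quantity $V:=\sum_{t=1}^T\sum_i([g_i(x_t)]_+)^2$.

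First I would apply the nonexpansiveness of $\Pi_{\mathcal{B}}$ to the $x$-update and expand to get
$$\|x_{t+1}-x^*\|^2 \le \|x_t-x^*\|^2 - 2\eta\langle \partial_x\mathcal{L}_t(x_t,\lambda_t), x_t-x^*\rangle + \eta^2 \|\partial_x\mathcal{L}_t(x_t,\lambda_t)\|^2.$$
Since $f_t$ and each $[g_i]_+$ are convex in $x$, the function $\mathcal{L}_t(\cdot,\lambda_t)$ is convex, so the inner product is at least $\mathcal{L}_t(x_t,\lambda_t)-\mathcal{L}_t(x^*,\lambda_t)$. Because $x^*\in S$ forces $[g_i(x^*)]_+=0$ and the quadratic-in-$\lambda$ term is independent of $x$, this difference reduces to $f_t(x_t)-f_t(x^*)+\sum_i \lambda_t^i[g_i(x_t)]_+$. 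Telescoping and using $\|x_1-x^*\|\le 2R$ yields
$$2\eta\sum_{t=1}^T\big(f_t(x_t)-f_t(x^*)\big) + 2\eta\sum_{t=1}^T\sum_i \lambda_t^i[g_i(x_t)]_+ \le 4R^2 + \eta^2\sum_{t=1}^T \|\partial_x\mathcal{L}_t(x_t,\lambda_t)\|^2.$$

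Second I would bound the subgradient by Cauchy--Schwarz and the Lipschitz assumption: $\|\partial_x\mathcal{L}_t\|^2\le(m+1)G^2(1+\|\lambda_t\|^2)$. Now substitute the $\lambda$-update, $\lambda_t^i=[g_i(x_t)]_+/(\sigma\eta)$ (with the convention $\lambda_1=0$). This makes the two nontrivial $\lambda$-dependent sums collapse to multiples of $V$: namely $\sum_t\sum_i\lambda_t^i[g_i(x_t)]_+ = V/(\sigma\eta)$ and $\sum_t\|\lambda_t\|^2 = V/(\sigma\eta)^2$. The master inequality becomes
$$2\eta\,\mathrm{Regret}_T(x^*) + \frac{2}{\sigma}V \le 4R^2 + \eta^2(m+1)G^2 T + \frac{(m+1)G^2}{\sigma^2}V.$$

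Third I would choose $\sigma=\frac{(m+1)G^2}{2(1-\alpha)}$, which makes $\frac{(m+1)G^2}{\sigma^2}=\frac{2(1-\alpha)}{\sigma}$, so the coefficient of $V$ on the left exceeds the one on the right by exactly $\frac{2\alpha}{\sigma}>0$, giving
$$2\eta\,\mathrm{Regret}_T(x^*) + \frac{2\alpha}{\sigma}V \le 4R^2 + \eta^2(m+1)G^2 T.$$
With $\eta=1/(G\sqrt{(m+1)RT})$ the right-hand side is $O(1)$. Dropping the nonnegative $V$-term gives $\mathrm{Regret}_T(x^*)\le O(1/\eta)=O(\sqrt{T})$. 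For the constraint part, I would lower-bound the regret via Lipschitz continuity by $-FT$, obtaining $\frac{2\alpha}{\sigma}V \le O(1)+2\eta FT = O(\sqrt{T})$, hence $V\le O(\sqrt{T})$ and therefore each $\sum_t([g_i(x_t)]_+)^2\le V\le O(\sqrt{T})$.

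The main obstacle is the one just navigated in step two: the explicit $\lambda_{t+1}=[g(x_{t+1})]_+/(\sigma\eta)$ rule couples the $\|\partial_x\mathcal{L}_t\|^2$ term and the $\sum_i\lambda_t^i[g_i(x_t)]_+$ term through the same quantity $V$, and a bad choice of $\sigma$ would let the gradient-noise term on the right dominate the constraint-penalty term on the left, breaking the argument. The prescribed $\sigma$ is the minimal value that leaves a strictly positive residual coefficient on $V$, and the prescribed $\eta$ is tuned so the residual $2\eta FT$ from the regret lower bound grows only like $\sqrt T$, matching the desired bound on $V$.
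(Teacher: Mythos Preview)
Your proposal is correct and follows essentially the same route as the paper: the paper packages your first two steps as a separate lemma bounding $\sum_t[\mathcal{L}_t(x_t,\lambda_t)-\mathcal{L}_t(x,\lambda_t)]$ (using $\|x-x_1\|^2\le R^2$ since $x_1=0$, versus your $4R^2$ diameter bound), then substitutes $\lambda_t=[g(x_t)]_+/(\sigma\eta)$ and the choice of $\sigma$ exactly as you do to obtain the same master inequality and the two conclusions.
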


From Theorem \ref{thm::sumOfSquareLongterm}, we can see that by
setting appropriate stepsize, $\eta$, and constant, $\sigma$,
we can obtain the upper bound for the regret of the loss function being less than or equal to $O(\sqrt{T})$,
which is also shown in \cite{mahdavi2012trading} \cite{jenatton2016adaptive}.
The main difference of the Theorem \ref{thm::sumOfSquareLongterm} is that 
previous results of \cite{mahdavi2012trading} \cite{jenatton2016adaptive}
all obtain the upper bound for the 
long-term constraint $\sum\limits_{t=1}^T g_i(x_t)$, 
while here the upper bound for 
the constraint violation of the form $\sum\limits_{t=1}^T\Big([g_i(x_t)]_+\Big)^2$ is achieved. 
Also note that the stepsize depends on $T$, which may not be available. 
In this case, we can use the 'doubling trick' described in the book \cite{cesa2006prediction}
to transfer our $T$-dependent algorithm into $T$-free one with a worsening factor of $\sqrt{2}/(\sqrt{2}-1)$. 

The proposed algorithm and the resulting bound are useful for two reasons:
1. The square-cumulative constraint implies a bound on the
cumulative constraint violation,
$\sum\limits_{t=1}^T[g_i(x_t)]_+$, while enforcing larger penalties
for large violations. 
%
2. The proposed algorithm can also upper bound the constraint violation for each single step $[g_i(x_t)]_+$, 
which is not bounded in the previous literature.

The next results show how to bound constraint violations at each
step. 

\begin{lemma}
  \label{lem:bound_step}
  {\it
     If there is only one differentiable constraint function $g(x)$ with Lipschitz continuous gradient parameter $L$, and we run the Algorithm \ref{alg::alg1}
     with the parameters in Theorem \ref{thm::sumOfSquareLongterm} and large enough $T$, we have
     \begin{equation*}
     \begin{array}{lll}
     [g(x_t)]_+ \le O(\frac{1}{T^{1/6}}),& \forall t \in \{1,2,...,T\},
     &if \quad [g(x_1)]_+ \le O(\frac{1}{T^{1/6}}).
     \end{array}
     \end{equation*}

  }
\end{lemma}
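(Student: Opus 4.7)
My plan is to prove the bound by induction on $t$, with the hypothesis $[g(x_1)]_+ \le O(T^{-1/6})$ giving the base case directly. For the inductive step, assuming $u_s := [g(x_s)]_+ \le C T^{-1/6}$ for a suitable constant $C$ and every $s \le t$, I want to show $u_{t+1} \le C T^{-1/6}$. The analysis splits on the sign of $g(x_t)$.

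\paragraph{Easy case: $g(x_t) \le 0$.} Here $\lambda_t = [g(x_t)]_+/(\sigma\eta) = 0$, so the update reduces to a pure projected gradient step on $f_t$ and $\|x_{t+1}-x_t\| \le \eta G$ by non-expansiveness of $\Pi_{\mathcal{B}}$. Lipschitz continuity of $g$ (constant $G$) then gives $g(x_{t+1}) \le 0 + \eta G^2 = O(T^{-1/2})$, and since $T^{-1/2} = o(T^{-1/6})$, the inductive bound holds for all sufficiently large $T$.

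\paragraph{Hard case: $g(x_t) > 0$.} Here I would invoke the Lipschitz gradient of $g$:
\[
g(x_{t+1}) \le g(x_t) + \nabla g(x_t)^\top (x_{t+1}-x_t) + \tfrac{L}{2}\|x_{t+1}-x_t\|^2.
\]
Plugging in the algorithm's step direction $v_t = \partial f_t(x_t) + (u_t/(\sigma\eta))\nabla g(x_t)$, bounding $|\nabla g(x_t)^\top \partial f_t| \le G\|\nabla g(x_t)\|$, and expanding $\|x_{t+1}-x_t\|^2 \le 2\eta^2 G^2 + 2(u_t/\sigma)^2\|\nabla g(x_t)\|^2$, the RHS becomes a quadratic in $a := \|\nabla g(x_t)\|$:
\[
u_{t+1} - u_t \;\le\; \eta G\,a \;-\; \tfrac{u_t}{2\sigma}\bigl(1 - \tfrac{Lu_t}{\sigma}\bigr)a^2 \;+\; L\eta^2 G^2.
\]
For $T$ large enough, $u_t \le CT^{-1/6}$ forces $Lu_t/\sigma \le 1/2$, so the $a^2$ coefficient is $\le -u_t/(2\sigma)$. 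Maximizing this quadratic in $a$ and plugging in the parameters from Theorem~\ref{thm::sumOfSquareLongterm} ($\eta = 1/(G\sqrt{(m+1)RT})$) yields
\[
u_{t+1} - u_t \;\le\; \tfrac{\sigma \eta^2 G^2}{2 u_t} + L\eta^2 G^2 \;=\; O(T^{-5/6})
\]
whenever $u_t \ge CT^{-1/6}$. The projection $\Pi_{\mathcal{B}}$ is handled by the first-order optimality condition $(x_{t+1}-x_t+\eta v_t)^\top(x_t - x_{t+1})\ge 0$, which controls the extra contribution of the projection residual to $\nabla g(x_t)^\top(x_{t+1}-x_t)$.

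\paragraph{Main obstacle.} The clear sticking point is closing the induction: a na\"ive sum of the per-step bound $O(T^{-5/6})$ over $T$ rounds gives $O(T^{1/6})$, which is too large. I would close the loop via one of two self-correction arguments. First, when $u_t$ sits near the threshold $C T^{-1/6}$, the pushback term $-(u_t/\sigma)\|\nabla g(x_t)\|^2$ dominates unless $\|\nabla g(x_t)\| = O(T^{-1/3})$; in the latter case the Lipschitz-gradient hypothesis implies $g$ is nearly flat in a ball of radius $\Omega(T^{-1/3})$, which the iterate (moving by $O(\eta)=O(T^{-1/2})$ per step) cannot leave quickly enough to accumulate meaningful change. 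Alternatively, one can combine the one-step drift with the global budget $\sum_{s=1}^T u_s^2 = O(\sqrt{T})$ from Theorem~\ref{thm::sumOfSquareLongterm}: if $u_{t^*}>CT^{-1/6}$ occurred, then by the drift bound $u_s \ge u_{t^*}/2$ throughout a backward window of length $\Omega(u_{t^*}T^{5/6})$, contradicting the sum-of-squares budget for $C$ large enough. Tightening the case-split so that this budget contradiction matches the $T^{-1/6}$ exponent (rather than a weaker one) is where the real work of the proof lies.
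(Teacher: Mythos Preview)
Your case split and use of the descent lemma match the paper's structure, but the ``main obstacle'' you identify is self-inflicted: by maximizing the quadratic over $a=\|\nabla g(x_t)\|$ you discard the one piece of information that closes the argument. The paper does \emph{not} treat $a$ as a free adversarial parameter. Instead it lower-bounds it via convexity of $g$ together with feasibility of the origin (recall $x_1=0$ is the center of $\mathcal{B}$ and $0\in S$, so $g(0)\le 0$):
\[
0 \ge g(0) \ge g(x_t)+\nabla g(x_t)^\top(0-x_t)
\quad\Longrightarrow\quad
\|\nabla g(x_t)\| \ge \frac{g(x_t)}{\|x_t\|} \ge \frac{g(x_t)}{R}.
\]
With this, whenever $g(x_t)\ge\epsilon$ for the specific choice $\epsilon=(2C\sigma R^2\eta)^{1/3}=O(T^{-1/6})$, the pushback term satisfies
\[
\frac{g(x_t)}{\sigma}\Bigl(1-\tfrac{Lg(x_t)}{2\sigma}\Bigr)\|\nabla g(x_t)\|^2
\;\ge\; \frac{\epsilon}{\sigma}\cdot\frac{1}{2}\cdot\frac{\epsilon^2}{R^2}
\;=\; C\eta,
\]
so $g(x_{t+1})\le g(x_t)$ outright. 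There is no accumulation to worry about, and neither of your proposed self-correction arguments (the ``nearly flat'' heuristic or the backward-window budget from Theorem~\ref{thm::sumOfSquareLongterm}) is needed. The induction is then simply: below $\epsilon$ the value can rise by at most $C\eta=O(T^{-1/2})$ per step; at or above $\epsilon$ it cannot rise at all; hence $g(x_t)\le \epsilon+C\eta=O(T^{-1/6})$ for all $t$.

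A secondary difference: the paper handles the projection by first proving $g(\Pi_{\mathcal{B}}(y_t))\le g(y_t)$ (using the projection inequality with test point $x_{t+1}-\epsilon_0\nabla g(x_{t+1})\in\mathcal{B}$), and then applies the descent lemma to the \emph{unprojected} point $y_t$ after peeling off the $-\eta\partial f_t(x_t)$ term via a zeroth-order Taylor bound. This is cleaner than threading the projection residual through the inner-product estimate as you suggest.
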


Lemma \ref{lem:bound_step} only considers single constraint case. 
For case of multiple differentiable constraints, 
we have the following:
\begin{proposition}
\label{prop::bound_step_max}
  {\it 
     For multiple differentiable constraint functions $g_i(x)$, $i\in\{1,2,...,m\}$ with Lipschitz continuous gradient parameters $L_i$,
     if we use $\bar{g}(x) = \log\Big(\sum\limits_{i=1}^m \exp{g_i(x)}\Big)$
     as the constraint function in Algorithm \ref{alg::alg1}, then for
     large enough $T$, we have 
     \begin{equation*}
     \begin{array}{lll}
     [g_i(x_t)]_+ \le O(\frac{1}{T^{1/6}}),& \forall i, t,
     &if \quad [\bar{g}(x_1)]_+ \le O(\frac{1}{T^{1/6}}).
     \end{array}
     \end{equation*}

  }
\end{proposition}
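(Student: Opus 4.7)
The plan is to reduce the multi-constraint case to the single-constraint setting of Lemma \ref{lem:bound_step} by exploiting the fact that the log-sum-exp function $\bar{g}$ is a smooth upper envelope of $\max_i g_i$. The key pointwise inequality is
\[
g_j(x) = \log\bigl(\exp g_j(x)\bigr) \le \log\bigl(\textstyle\sum_{i=1}^m \exp g_i(x)\bigr) = \bar{g}(x)
\]
for every index $j$ and every $x$, which gives $[g_j(x)]_+ \le [\bar{g}(x)]_+$ because the positive-part operator is monotone: if $g_j(x)\le 0$ the left side vanishes, and if $g_j(x) > 0$ then $\bar{g}(x)\ge g_j(x) > 0$. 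So it is enough to bound $[\bar{g}(x_t)]_+$ by $O(T^{-1/6})$.

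Next I would verify that $\bar{g}$ itself satisfies the hypotheses of Lemma \ref{lem:bound_step}, namely convexity, differentiability, and Lipschitz continuity of its gradient. Convexity and differentiability of log-sum-exp composed with convex differentiable functions are standard. For the Lipschitz-gradient property a direct calculation gives
\[
\nabla \bar{g}(x) = \sum_{i=1}^m p_i(x)\nabla g_i(x), \qquad p_i(x) = \frac{\exp g_i(x)}{\sum_{j=1}^m \exp g_j(x)},
\]
so that $\nabla^2 \bar{g}(x)$ splits into a $p$-weighted average of the Hessians $\nabla^2 g_i(x)$ plus a $p$-weighted covariance of the $\nabla g_i(x)$. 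On $\mathcal{B}$ the first piece is bounded in spectral norm by $\max_i L_i$ (since $p(x)$ is a probability vector), and the covariance piece is bounded by $L_g^2$ since each $\nabla g_i$ has norm at most $L_g$ by the assumed Lipschitz continuity of $g_i$. Hence $\bar{g}$ has a gradient Lipschitz constant $\bar{L}\le \max_i L_i + L_g^2$.

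With these ingredients, running Algorithm \ref{alg::alg1} with the single constraint $\bar{g}$ under the parameters of Theorem \ref{thm::sumOfSquareLongterm} and the stated initial condition $[\bar{g}(x_1)]_+\le O(T^{-1/6})$ triggers Lemma \ref{lem:bound_step}, which yields $[\bar{g}(x_t)]_+ \le O(T^{-1/6})$ for every $t$. Combining this with the pointwise inequality above completes the proof: $[g_i(x_t)]_+ \le [\bar{g}(x_t)]_+ \le O(T^{-1/6})$ simultaneously for all $i$ and all $t$. The main obstacle I anticipate is not the reduction itself but tracking how the Lipschitz constants $L_f$, $L_g$, and the new smoothness constant $\bar{L}$ flow through the choices of $\sigma$ and $\eta$ in Theorem \ref{thm::sumOfSquareLongterm}, so that the hidden constants absorbed into $O(T^{-1/6})$ are genuinely bounded independently of $T$; this is bookkeeping rather than a new analytic difficulty.
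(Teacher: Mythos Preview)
Your proposal is correct and follows essentially the same route as the paper: reduce to a single constraint via the log-sum-exp envelope $\bar g\ge\max_i g_i$, then verify that $\bar g$ is convex, differentiable, has bounded gradient, and has Lipschitz gradient so that Lemma~\ref{lem:bound_step} applies. Your Hessian decomposition into a $p$-weighted average of the $\nabla^2 g_i$ plus a gradient covariance, with the resulting bound $\max_i L_i + L_g^2$, is exactly the paper's computation (the paper writes $\bar L + G^2$) presented a bit more cleanly.
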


Clearly, both Lemma \ref{lem:bound_step} and Proposition \ref{prop::bound_step_max} 
only deal with differentiable functions.
For a non-differentiable function $g(x)$,
we can first use a differentiable function $\bar{g}(x)$ to approximate the $g(x)$
with $\bar{g}(x)\ge g(x)$, and then apply the previous Lemma \ref{lem:bound_step}
and Proposition \ref{prop::bound_step_max} to upper bound
each individual $g_i(x_t)$. 
Many non-smooth convex functions can be approximated in this way as
shown in \cite{nesterov2005smooth}. 

\subsection{Strongly Convex Case:}

For $f_t(x)$ to be strongly convex, the Algorithm \ref{alg::alg1} is still valid. 
But in order to have lower upper bounds for both objective regret and the clipped long-term constraint
 $\sum\limits_{t=1}^T [g_i(x_t)]_+$ compared with Proposition \ref{prop::tradeOffLossAndConstraint} in next section, 
we need to use time-varying stepsize as 
the one used in \cite{hazan2007logarithmic}. Thus, we modify the update rule 
of $x_t$, $\lambda_t$ to have time-varying stepsize 
as below:
\begin{equation}
\label{eq::update_strongly_convex}
\begin{array}{ll}

x_{t+1} = \Pi_{\mathcal{B}}(x_t-\eta_t \partial_x \mathcal{L}_t(x_t,\lambda_t)),&
\lambda_{t+1} = \frac{[g(x_{t+1})]_+}{\theta_{t+1}}.
\end{array}
\end{equation}

If we replace the update rule in Algorithm \ref{alg::alg1} with Eq.(\ref{eq::update_strongly_convex}),
we can obtain the following theorem: 

\begin{theorem}
\label{thm::stronglyconvex}
{\it
Assume $f_t(x)$ has strongly convexity parameter $H_1$.
If we set $\eta_t = \frac{1}{H_1(t+1)}$, $\theta_t=\eta_t(m+1)G^2$,
follow the new update rule in Eq.(\ref{eq::update_strongly_convex}),
and $x^*$ being the optimal solution for $\min\limits_{x\in S}\sum\limits_{t=1}^Tf_t(x)$, 
for $\forall i \in \{1,2,...,m\}$,
we have
\begin{equation*}
\begin{array}{ll}
\sum\limits_{t=1}^T\Big( f_t(x_t) - f_t(x^*)\Big)\le O(\log(T)),
&\sum\limits_{t=1}^Tg_i(x_t)\le\sum\limits_{t=1}^T[g_i(x_t)]_+ \le O(\sqrt{\log(T) T}).
\end{array}
\end{equation*}
}
\end{theorem}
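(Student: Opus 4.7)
The plan is to adapt the analysis underlying Theorem \ref{thm::sumOfSquareLongterm} by replacing the constant stepsize with the schedule $\eta_t = 1/(H_1(t+1))$, so that the strong-convexity surplus $-\tfrac{H_1}{2}\|x_t-x^*\|^2$ cancels the coefficient increments of a telescoping sum, in the same spirit as the strongly-convex OGD analysis of \cite{hazan2007logarithmic}. This converts the $O(\sqrt{T})$ regret into $O(\log T)$, and Cauchy--Schwarz then yields the stated $O(\sqrt{T\log T})$ constraint bound.

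First I would write the per-step inequality. Nonexpansiveness of the projection and expansion of $\|x_{t+1}-x^*\|^2$ via the update in (\ref{eq::update_strongly_convex}) give a bound on $\langle \partial_x\mathcal{L}_t(x_t,\lambda_t), x_t-x^*\rangle$. Since $f_t$ is $H_1$-strongly convex and each $[g_i]_+$ is convex in $x$, $\mathcal{L}_t(\cdot,\lambda_t)$ inherits $H_1$-strong convexity, and because $[g_i(x^*)]_+=0$ we have $\mathcal{L}_t(x_t,\lambda_t)-\mathcal{L}_t(x^*,\lambda_t) = f_t(x_t)-f_t(x^*) + \sum_i\lambda_t^i[g_i(x_t)]_+$. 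The key algebraic observations are
\begin{equation*}
\sum_i\lambda_t^i[g_i(x_t)]_+ = \theta_t\|\lambda_t\|^2, \qquad \|\partial_x\mathcal{L}_t\|^2 \le (m+1)G^2(1+\|\lambda_t\|^2) = \tfrac{\theta_t}{\eta_t}(1+\|\lambda_t\|^2),
\end{equation*}
where the first identity uses the explicit update $\lambda_t^i=[g_i(x_t)]_+/\theta_t$ and the second follows from the triangle inequality and Cauchy--Schwarz applied to the subgradient decomposition of $\mathcal{L}_t$. The calibration $\theta_t=\eta_t(m+1)G^2$ from the hypothesis is chosen precisely so that the $\|\lambda_t\|^2$ portion of the gradient-norm term is absorbed by half of the cross-term, producing
\begin{equation*}
f_t(x_t)-f_t(x^*) + \tfrac{\theta_t}{2}\|\lambda_t\|^2 \le \tfrac{\|x_t-x^*\|^2-\|x_{t+1}-x^*\|^2}{2\eta_t} - \tfrac{H_1}{2}\|x_t-x^*\|^2 + \tfrac{\theta_t}{2}.
\end{equation*}

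Summing from $t=1$ to $T$, Abel summation on the projection telescope yields coefficient increments $\tfrac{1}{2\eta_t}-\tfrac{1}{2\eta_{t-1}}=H_1/2$ that exactly cancel the $-\tfrac{H_1}{2}\|x_t-x^*\|^2$ terms, leaving only a boundary contribution $\tfrac{1}{2\eta_1}\|x_1-x^*\|^2 = O(R^2)$. Since $\sum_{t=1}^T \theta_t = O(\log T)$, dropping the nonnegative $\tfrac{\theta_t}{2}\|\lambda_t\|^2$ gives the regret bound. For the constraint bound, the trivial lower bound $f_t(x_t)-f_t(x^*)\ge -F$ upgrades the same inequality to $\sum_t \theta_t\|\lambda_t\|^2 \le O(\log T)+2FT = O(T)$, and Cauchy--Schwarz applied to $[g_i(x_t)]_+=\theta_t\lambda_t^i$ gives
\begin{equation*}
\sum_{t=1}^T [g_i(x_t)]_+ = \sum_{t=1}^T \sqrt{\theta_t}\cdot\sqrt{\theta_t}\,\lambda_t^i \le \sqrt{\textstyle\sum_t \theta_t}\cdot\sqrt{\textstyle\sum_t \theta_t (\lambda_t^i)^2} = O(\sqrt{\log T})\cdot O(\sqrt{T}).
\end{equation*}

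The main obstacle I expect is the algebraic coordination in the per-step bound: the identity $\sum_i\lambda_t^i[g_i(x_t)]_+=\theta_t\|\lambda_t\|^2$, the subgradient-norm bound, the calibration $\theta_t=\eta_t(m+1)G^2$, and the strong-convexity telescope must all line up simultaneously for the $\|\lambda_t\|^2$ contributions on the right-hand side to cancel cleanly. The $t=1$ case with $\lambda_1=0$ initialization and the subgradient convention for $[g_i]_+$ at points where $g_i(x_t)=0$ require small sanity checks but do not affect the asymptotic bounds.
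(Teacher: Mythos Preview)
Your proof is correct and shares the same skeleton as the paper's: strong convexity of $\mathcal{L}_t(\cdot,\lambda_t)$, the projection inequality, the subgradient-norm bound $\|\partial_x\mathcal{L}_t\|^2\le(m{+}1)G^2(1{+}\|\lambda_t\|^2)$, the calibration $\theta_t=(m{+}1)G^2\eta_t$ that kills the $\|\lambda_t\|^2$ term on the right, and the telescoping cancellation from $\eta_t=1/(H_1(t{+}1))$. The difference is in how the constraint bound is extracted. The paper keeps a \emph{free} dual variable: using $\mathcal{L}_t(x_t,\lambda)\le\mathcal{L}_t(x_t,\lambda_t)$ (optimality of $\lambda_t$) it derives, for every $\lambda\ge 0$,
\[
f_t(x_t)-f_t(x^*)+\lambda^T[g(x_t)]_+-\tfrac{\theta_t}{2}\|\lambda\|^2\;\le\;\tfrac{b_t-b_{t+1}}{2\eta_t}-\tfrac{H_1}{2}b_t+\tfrac{\theta_t}{2},
\]
sums over $t$, and then chooses $\lambda=\bigl(\sum_t[g(x_t)]_+\bigr)/\bigl(\sum_t\theta_t\bigr)$ to obtain $\bigl\|\sum_t[g(x_t)]_+\bigr\|^2\le 2\bigl(\sum_t\theta_t\bigr)(O(\log T)+FT)$ directly. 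You instead commit to $\lambda=\lambda_t$ from the start, use the closed-form identity $\sum_i\lambda_t^i[g_i(x_t)]_+=\theta_t\|\lambda_t\|^2$ to get $\sum_t\theta_t\|\lambda_t\|^2\le O(T)$, and then recover the same $O(\sqrt{T\log T})$ bound via Cauchy--Schwarz with the splitting $[g_i(x_t)]_+=\sqrt{\theta_t}\cdot\sqrt{\theta_t}\,\lambda_t^i$. Your route is slightly more elementary (no free-variable saddle-point device) and yields as a byproduct the weighted per-step control $\sum_t[g_i(x_t)]_+^2/\theta_t=O(T)$; the paper's route avoids the final Cauchy--Schwarz step and is closer in spirit to a standard primal--dual gap argument.
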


The paper \cite{jenatton2016adaptive} also has a discussion of
strongly convex functions, 
but only provides a bound similar to the convex one.
Theorem \ref{thm::stronglyconvex} shows the improved bounds for both objective regret and
the constraint violation.
On one hand the objective regret is consistent with the standard OCO result in \cite{hazan2007logarithmic}, 
and on the other the constraint violation is further reduced compared with the result in \cite{jenatton2016adaptive}.

\section{Relation with Previous Results}

In this section, we extend Theorem
\ref{thm::sumOfSquareLongterm} to enable direct
comparison with the results from  \cite{mahdavi2012trading}
\cite{jenatton2016adaptive}. In particular, it is shown how
Algorithm~\ref{alg::alg1} recovers the existing regret bounds, while
the use of the new augmented Lagrangian \eqref{eq::new long term
  lagrangian} in the previous algorithms also provides regret bounds
for the clipped constraint case. 

The first result puts a bound on the clipped long-term constraint, rather
than the sum-of-squares that appears in
Theorem~\ref{thm::sumOfSquareLongterm}. This will allow more direct
comparisons with the existing results. 

\begin{proposition}
\label{prop::similarResultTo2012}
  {\it 
      If $\sigma = \frac{(m+1)G^2}{2(1-\alpha)}$, 
      $\eta = O(\frac{1}{\sqrt{T}})$,
      $\alpha\in (0,1)$, and
      $x^* = \underset{x\in S}\argmin\sum\limits_{t=1}^Tf_t(x)$, 
      then the result of Algorithm \ref{alg::alg1} satisfies
      \begin{equation*}
      \begin{array}{ll}
      \sum\limits_{t=1}^T\Big( f_t(x_t) - f_t(x^*)\Big)\le O(\sqrt{T}),
      &\sum\limits_{t=1}^Tg_i(x_t)\le \sum\limits_{t=1}^T[g_i(x_t)]_+ \le O(T^{3/4}), \forall i \in \{1,2,...,m\}
      \end{array}
      \end{equation*}
  }
\end{proposition}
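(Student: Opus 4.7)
The plan is to run the standard online gradient descent analysis on the augmented Lagrangian $\mathcal{L}_t(x,\lambda)$ and then exploit the explicit closed-form update $\lambda_{t+1} = [g(x_{t+1})]_+/(\sigma\eta)$ to get a self-bounding inequality in the cumulative squared violations. Concretely, I would start from the projection step $x_{t+1} = \Pi_{\mathcal{B}}(x_t - \eta\partial_x\mathcal{L}_t(x_t,\lambda_t))$: non-expansiveness of $\Pi_{\mathcal{B}}$ plus convexity of $\mathcal{L}_t(\cdot,\lambda_t)$ gives the usual
\[
\mathcal{L}_t(x_t,\lambda_t) - \mathcal{L}_t(x^*,\lambda_t) \le \frac{\|x_t-x^*\|^2 - \|x_{t+1}-x^*\|^2}{2\eta} + \frac{\eta}{2}\|\partial_x\mathcal{L}_t(x_t,\lambda_t)\|^2.
\]
Summing over $t$, the telescoping term is bounded by $\|x_1-x^*\|^2/(2\eta) \le 2R^2/\eta$, and since $x^* \in S$ gives $[g_i(x^*)]_+ = 0$, the left side reduces to $\sum_t(f_t(x_t)-f_t(x^*)) + \sum_t\sum_i \lambda_{t,i}[g_i(x_t)]_+$.

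Next I would bound the gradient norm. By convexity of norms and Lipschitz continuity, $\|\partial_x\mathcal{L}_t(x_t,\lambda_t)\|^2 \le (m+1)G^2(1+\|\lambda_t\|^2)$. Substituting the explicit $\lambda_t = [g(x_t)]_+/(\sigma\eta)$, both the linear term $\sum_i\lambda_{t,i}[g_i(x_t)]_+ = \|[g(x_t)]_+\|^2/(\sigma\eta)$ and the gradient-norm term $\|\lambda_t\|^2 = \|[g(x_t)]_+\|^2/(\sigma\eta)^2$ become proportional to $\sum_i([g_i(x_t)]_+)^2$. Collecting coefficients yields
\[
\sum_{t=1}^T(f_t(x_t)-f_t(x^*)) + \frac{1}{\sigma\eta}\left[1-\frac{(m+1)G^2}{2\sigma}\right]\sum_{t=1}^T\sum_{i=1}^m([g_i(x_t)]_+)^2 \le \frac{2R^2}{\eta} + \frac{\eta(m+1)G^2T}{2}.
\]
The choice $\sigma = (m+1)G^2/(2(1-\alpha))$ makes the bracket equal to $\alpha$, so the squared-violation coefficient is $\alpha/(\sigma\eta) > 0$.

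With $\eta = O(1/\sqrt{T})$ the right-hand side is $O(\sqrt{T})$, and since $f_t(x_t)-f_t(x^*) \ge -F$ is bounded, moving $\sum_t(f_t(x_t)-f_t(x^*))$ to the right side costs only $O(T)$. Multiplying through by $\sigma\eta/\alpha = O(1/\sqrt{T})$ gives $\sum_t\sum_i([g_i(x_t)]_+)^2 \le O(\sqrt{T})$. Dropping the non-negative squared-violation term from the main inequality directly yields the regret bound $\sum_t(f_t(x_t)-f_t(x^*)) \le O(\sqrt{T})$. Finally, Cauchy--Schwarz converts the $O(\sqrt{T})$ bound on the squared violations into the desired cumulative bound:
\[
\sum_{t=1}^T[g_i(x_t)]_+ \le \sqrt{T\sum_{t=1}^T([g_i(x_t)]_+)^2} \le O(T^{3/4}).
\]

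The main obstacle will be handling the interaction between the $\|\lambda_t\|^2$ term appearing in the gradient bound and the $\sum_i\lambda_{t,i}[g_i(x_t)]_+$ term on the left: both become proportional to $\sum_i([g_i(x_t)]_+)^2$ after substituting the explicit $\lambda$ update, and one must keep careful track of constants so that the calibrated $\sigma$ leaves a strictly positive coefficient $\alpha/(\sigma\eta)$ on the squared violations. The rest (dropping the regret term via the uniform bound $F$, and finishing with Cauchy--Schwarz) is bookkeeping.
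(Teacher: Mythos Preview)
Your proposal is correct and follows essentially the same argument as the paper: the paper packages the OGD-on-the-Lagrangian analysis you spell out (projection inequality, gradient-norm bound, explicit substitution of $\lambda_t=[g(x_t)]_+/(\sigma\eta)$, calibration of $\sigma$) as Theorem~\ref{thm::sumOfSquareLongterm} and simply cites it to get $\sum_t([g_i(x_t)]_+)^2\le O(\sqrt{T})$, then applies the same Cauchy--Schwarz step $\bigl(\sum_t[g_i(x_t)]_+\bigr)^2\le T\sum_t([g_i(x_t)]_+)^2$ to reach $O(T^{3/4})$. The only cosmetic difference is that the paper uses $x_1=0$ to get the tighter constant $R^2/(2\eta)$ in the telescoping term rather than your $2R^2/\eta$, which does not affect the order.
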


This result shows that our algorithm generalizes the
regret and long-term constraint bounds of \cite{mahdavi2012trading}.
%

The next result shows that by changing our constant stepsize accordingly,
with the Algorithm \ref{alg::alg1}, we can 
achieve the user-defined trade-off from \cite{jenatton2016adaptive}.
Furthermore, we also include the squared version and clipped
constraint violations.

\begin{proposition}
\label{prop::tradeOffLossAndConstraint}
  {\it 
      If $\sigma = \frac{(m+1)G^2}{2(1-\alpha)}$, $\eta =
      O(\frac{1}{T^{\beta}})$, $\alpha\in(0,1)$, $\beta \in(0,1)$,
      and $x^* = \underset{x\in S}\argmin\sum\limits_{t=1}^Tf_t(x)$, 
      then the result of  Algorithm~\ref{alg::alg1} satisfies
      \begin{equation*}
      \begin{array}{lll}
      \sum\limits_{t=1}^T\Big( f_t(x_t) - f_t(x^*)\Big)\le O(T^{max\{\beta,1-\beta\}}),\\
      \sum\limits_{t=1}^Tg_i(x_t)\le \sum\limits_{t=1}^T[g_i(x_t)]_+ \le O(T^{1-\beta/2}),
      &\sum\limits_{t=1}^T([g_i(x_t)]_+)^2\le O(T^{1-\beta}), \forall i \in \{1,2,...,m\}
      \end{array}
      \end{equation*}
  }
\end{proposition}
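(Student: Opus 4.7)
The plan is to replay the proof of Theorem~\ref{thm::sumOfSquareLongterm} with the constant stepsize kept as a free parameter $\eta$, and then read off the exponents after substituting $\eta=O(T^{-\beta})$. Nothing in that earlier argument really forced the specific choice $\eta=1/(G\sqrt{(m+1)RT})$ beyond the desire to balance two particular terms, so the same manipulations should produce bounds that depend explicitly on $1/\eta$ and $\eta T$, which under the new scaling become $T^{\beta}$ and $T^{1-\beta}$.

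Concretely, I would start from convexity of $\mathcal{L}_t(\cdot,\lambda_t)$ in $x$ combined with the standard one-step projected-gradient inequality $\|x_{t+1}-x^*\|^2 \le \|x_t-x^*\|^2 - 2\eta\langle\partial_x\mathcal{L}_t(x_t,\lambda_t),x_t-x^*\rangle + \eta^2\|\partial_x\mathcal{L}_t(x_t,\lambda_t)\|^2$. Because $x^*\in S$ makes $[g_i(x^*)]_+=0$, the regularization $-\tfrac{\sigma\eta}{2}\|\lambda_t\|^2$ cancels between $\mathcal{L}_t(x_t,\lambda_t)$ and $\mathcal{L}_t(x^*,\lambda_t)$, leaving
\[
f_t(x_t)-f_t(x^*)+\sum_{i=1}^m\lambda_t^i[g_i(x_t)]_+ \le \frac{\|x_t-x^*\|^2-\|x_{t+1}-x^*\|^2}{2\eta}+\frac{\eta}{2}\|\partial_x\mathcal{L}_t(x_t,\lambda_t)\|^2.
\]
Next I would bound $\|\partial_x\mathcal{L}_t\|^2 \le (m+1)G^2(1+\|\lambda_t\|^2)$ by Cauchy--Schwarz, telescope the squared distances against $\|x_1-x^*\|^2\le 4R^2$, and substitute the closed-form dual update $\lambda_t^i=[g_i(x_t)]_+/(\sigma\eta)$. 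This turns the left-hand cross term into $\tfrac{1}{\sigma\eta}\sum_{t,i}([g_i(x_t)]_+)^2$ and the right-hand $\|\lambda_t\|^2$ contribution into $\tfrac{(m+1)G^2}{2\sigma^2\eta}\sum_{t,i}([g_i(x_t)]_+)^2$. The choice $\sigma=(m+1)G^2/(2(1-\alpha))$ is calibrated exactly so that the latter coefficient equals $(1-\alpha)/(\sigma\eta)$, leaving a net coefficient $\alpha/(\sigma\eta)$ on the left. Crucially this cancellation is \emph{independent} of $\eta$, so the same $\sigma$ works for every $\beta$.

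After rearrangement this gives an inequality of the form
\[
\sum_{t=1}^T\bigl(f_t(x_t)-f_t(x^*)\bigr)+\frac{\alpha}{\sigma\eta}\sum_{i,t}\bigl([g_i(x_t)]_+\bigr)^2 \le \frac{2R^2}{\eta}+\frac{(m+1)G^2}{2}\eta T.
\]
Dropping the nonnegative squared term on the left immediately yields regret $O(1/\eta+\eta T)=O(T^{\max\{\beta,1-\beta\}})$. Using instead the trivial lower bound $f_t(x_t)-f_t(x^*)\ge -F$ on the left and solving for the squared sum gives $\sum_t([g_i(x_t)]_+)^2 \le \tfrac{\sigma\eta}{\alpha}\bigl(FT+2R^2/\eta+(m+1)G^2\eta T/2\bigr)$, whose dominant piece is $\tfrac{\sigma F}{\alpha}\eta T = O(T^{1-\beta})$ since $\eta\to 0$. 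Finally, Cauchy--Schwarz in the form $(\sum_t[g_i(x_t)]_+)^2\le T\sum_t([g_i(x_t)]_+)^2$ upgrades this to $\sum_t[g_i(x_t)]_+ \le O(T^{1-\beta/2})$, and $\sum_t g_i(x_t)\le\sum_t[g_i(x_t)]_+$ is automatic.

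The main obstacle is not conceptual but bookkeeping: one must track three different $\eta$-dependent terms ($1/\eta$, $\eta T$, and $\eta T$ coming from $FT\cdot\sigma\eta/\alpha$) and verify that the $\sigma$-calibration absorbs the right amount of the squared-constraint sum \emph{uniformly} in $\eta$, so that the two inequalities (one for regret, one for the squared violation) can be extracted from the same master inequality for every $\beta\in(0,1)$. Once that is confirmed, no new ideas beyond Theorem~\ref{thm::sumOfSquareLongterm} are needed.
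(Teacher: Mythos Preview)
Your proposal is correct and follows essentially the same route as the paper's proof: reuse the master inequality from the proof of Theorem~\ref{thm::sumOfSquareLongterm} (Lemma~\ref{lem:sumOfLagfunction} plus the $\sigma$-calibration, which as you note is independent of $\eta$), substitute $\eta=O(T^{-\beta})$ to get $\frac{R^2}{2\eta}+\frac{(m+1)G^2}{2}\eta T=O(T^{\max\{\beta,1-\beta\}})$, then use $\sum_t(f_t(x_t)-f_t(x^*))\ge -FT$ for the $O(T^{1-\beta})$ squared-violation bound and Cauchy--Schwarz for the $O(T^{1-\beta/2})$ clipped bound. The only cosmetic difference is that the paper telescopes against $R^2$ (since $x_1=0$) rather than $4R^2$, which does not affect the orders.
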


Proposition \ref{prop::tradeOffLossAndConstraint} 
 provides a systematic way 
to balance the regret of the objective and the constraint violation.
Next, we will show that previous algorithms can use our proposed augmented Lagrangian function to 
have their own clipped long-term constraint bound.
\begin{proposition}
\label{prop::true_violation_bound_2011}
{\it
       If we run Algorithm 1 in \cite{mahdavi2012trading} with the augmented Lagrangian formula 
       defined in Eq.(\ref{eq::new long term lagrangian}), the result satisfies
      \begin{equation*}
      \begin{array}{ll}
      \sum\limits_{t=1}^T\Big( f_t(x_t) - f_t(x^*)\Big)\le O(\sqrt{T}),
      &\sum\limits_{t=1}^Tg_i(x_t)\le \sum\limits_{t=1}^T[g_i(x_t)]_+ \le O(T^{3/4}), \forall i \in \{1,2,...,m\}.
      \end{array}
      \end{equation*}
  }
\end{proposition}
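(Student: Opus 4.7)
The plan is to mirror the proof of Mahdavi et al. (2012, Algorithm 1) line by line, with the single modification that wherever $g_i(x)$ appears in the analysis via the Lagrangian, one substitutes $[g_i(x)]_+$. Because $[g_i(x)]_+$ is still convex (maximum of two convex functions) with subgradient bounded by $L_g$ on $\mathcal{B}$, and because the feasibility of the comparator $x^\star \in S$ gives $[g_i(x^\star)]_+ = 0$, all the structural properties Mahdavi's argument relies on are preserved. The clipping only makes the nonnegative term $\lambda_t^\top [g(x_t)]_+$ larger than $\lambda_t^\top g(x_t)$, and this is exactly what allows the long-term clipped-constraint bound to emerge.

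First, I would use convexity of $\mathcal{L}_t(\cdot,\lambda_t)$ together with the projected-gradient update $x_{t+1} = \Pi_{\mathcal{B}}(x_t - \eta \partial_x \mathcal{L}_t(x_t,\lambda_t))$ and the standard nonexpansiveness of $\Pi_{\mathcal{B}}$ to obtain the per-step inequality
\[
\mathcal{L}_t(x_t,\lambda_t) - \mathcal{L}_t(x^\star,\lambda_t) \le \frac{\|x_t - x^\star\|^2 - \|x_{t+1}-x^\star\|^2}{2\eta} + \frac{\eta}{2}\|\partial_x \mathcal{L}_t(x_t,\lambda_t)\|^2,
\]
and bound $\|\partial_x \mathcal{L}_t\|^2 \le 2L_f^2 + 2mL_g^2 \|\lambda_t\|^2$ via the clipped-subgradient formula in Algorithm~1. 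Substituting $\mathcal{L}_t(x^\star,\lambda_t) = f_t(x^\star) - \tfrac{\sigma\eta}{2}\|\lambda_t\|^2$ (using $[g_i(x^\star)]_+=0$) and summing over $t=1,\dots,T$ produces the baseline inequality relating the loss regret, the weighted constraint term $\sum_t \lambda_t^\top [g(x_t)]_+$, and $\sum_t \|\lambda_t\|^2$.

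Second, I would run the analogous analysis on the $\lambda$ update, which is projected gradient ascent on the concave-in-$\lambda$ function $\mathcal{L}_t(x_t,\cdot)$ with gradient $[g(x_t)]_+ - \sigma\eta\lambda_t$. For any fixed $\lambda \succeq 0$, the standard inequality yields
\[
\mathcal{L}_t(x_t,\lambda) - \mathcal{L}_t(x_t,\lambda_t) \le \frac{\|\lambda_t-\lambda\|^2 - \|\lambda_{t+1}-\lambda\|^2}{2\eta} + \frac{\eta}{2}\|[g(x_t)]_+ - \sigma\eta\lambda_t\|^2.
\]
Combining the two per-step bounds and telescoping gives, for every $\lambda \succeq 0$, an inequality of the form
\[
\sum_{t=1}^T (f_t(x_t) - f_t(x^\star)) + \lambda^\top\!\!\sum_{t=1}^T [g(x_t)]_+ - \frac{\sigma\eta T}{2}\|\lambda\|^2 \le \frac{R^2 + \|\lambda\|^2}{2\eta} + C_1\eta T + C_2 \eta \sum_{t=1}^T\|\lambda_t\|^2,
\]
where the constants come from the Lipschitz bounds. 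Choosing $\sigma$ large enough (as in Theorem~\ref{thm::sumOfSquareLongterm}) lets the $\sigma$-term absorb $C_2\eta \sum_t \|\lambda_t\|^2$, yielding a clean bound on $\sum_t(f_t(x_t)-f_t(x^\star)) + \lambda^\top \sum_t [g(x_t)]_+$ that grows like $\eta T + \|\lambda\|^2/\eta$.

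Finally, to extract the two stated bounds, I would specialize $\lambda$: setting $\lambda = 0$ yields $\sum_t f_t(x_t) - f_t(x^\star) \le O(\eta T + 1/\eta)$, and with $\eta = \Theta(1/\sqrt{T})$ this gives the $O(\sqrt{T})$ regret. To get the constraint bound, for each $i$ I would plug in $\lambda$ having zero coordinates except coordinate $i$ set to $\tfrac{1}{2\sigma\eta T}\sum_t [g_i(x_t)]_+$, which turns the left side into a quadratic in $V_i := \sum_t [g_i(x_t)]_+$ of the form $\tfrac{V_i^2}{4\sigma\eta T} - F T \le O(\eta T + 1/\eta)$, so that $V_i = O(\sqrt{\sigma \eta T \cdot (\eta T + 1/\eta)}) = O(T^{3/4})$ under $\eta = \Theta(1/\sqrt{T})$. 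The main technical obstacle will be the coupling introduced by $C_2 \eta \sum_t \|\lambda_t\|^2$: I must argue that with the projected-gradient $\lambda$-update (not the closed-form one of Algorithm~1), the $\|\lambda_t\|^2$ terms can still be absorbed by the $\tfrac{\sigma \eta}{2}\|\lambda_t\|^2$ regularizer along the $t$-sum, which requires the same $\sigma \ge (m+1)G^2/(2(1-\alpha))$ tuning used in Theorem~\ref{thm::sumOfSquareLongterm}, so essentially the original Mahdavi argument transfers verbatim once $g_i$ is replaced by $[g_i]_+$.
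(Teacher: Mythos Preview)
Your proposal is correct and follows essentially the same approach as the paper's own (sketch) proof: both argue that Mahdavi et al.'s saddle-point analysis transfers verbatim once $g_i$ is replaced by $[g_i]_+$, since convexity, the subgradient bound, and the comparator property $[g_i(x^\star)]_+=0$ are all preserved, and the final maximization over $\lambda\succeq 0$ is automatically unconstrained because $\sum_t[g_i(x_t)]_+\ge 0$. You have simply written out in detail what the paper leaves as a one-paragraph sketch; the minor discrepancy in your choice of $\lambda_i$ versus the exact maximizer (which yields the denominator $\sigma\eta T+m/\eta$ in Mahdavi's key inequality) does not affect the $O(T^{3/4})$ order.
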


For the update rule proposed in \cite{jenatton2016adaptive},
we need to change the 
$\mathcal{L}_t(x,\lambda)$ to the following one:
\begin{equation}
\label{eq::new_lag_2016}
  \mathcal{L}_t(x,\lambda) = f_t(x)+\lambda [g(x)]_+ - \frac{\theta_t}{2}\lambda^2
\end{equation}
where $g(x) = \max_{i\in \{1,\dots,m\}} g_i(x)$.

\begin{proposition}
\label{prop::true_violation_bound_2016}
  {\it 
      If we use the update rule and the parameter choices in \cite{jenatton2016adaptive} with
      the augmented Lagrangian in Eq.(\ref{eq::new_lag_2016}),
      then $\forall i \in \{1,...,m\}$, we have
      \begin{equation*}
      \begin{array}{ll}
      \sum\limits_{t=1}^T\Big( f_t(x_t) - f_t(x^*)\Big)\le O(T^{max\{\beta,1-\beta\}}),
      &\sum\limits_{t=1}^Tg_i(x_t)\le \sum\limits_{t=1}^T[g_i(x_t)]_+ \le O(T^{1-\beta/2}).
      \end{array}
      \end{equation*}
  }
\end{proposition}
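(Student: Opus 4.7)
The plan is to mirror the analysis from \cite{jenatton2016adaptive}, accounting for the replacement of $g(x)$ by $[g(x)]_+$ in the augmented Lagrangian of Eq.~(\ref{eq::new_lag_2016}). The central observation that makes the adaptation work is that $[g(x)]_+$ with $g(x)=\max_i g_i(x)$ remains convex and $L_g$-Lipschitz, so any subgradient used in the $x$-update still satisfies $\|\partial_x[g(x)]_+\|\le L_g$. Moreover, the unconstrained maximizer of $\mathcal{L}_t(x_{t+1},\cdot)$ in $\lambda$ is $\lambda_{t+1}=[g(x_{t+1})]_+/\theta_{t+1}$, which is automatically in $[0,\infty)$, so the projection step on $\lambda$ in \cite{jenatton2016adaptive} is inactive and reduces to this closed form. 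Hence the algorithm of \cite{jenatton2016adaptive} with the modified Lagrangian is structurally the same, with only the subgradient-of-constraint term changed from $\nabla g(x_t)$ to a subgradient of $[g(x_t)]_+$.

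First I would reproduce the per-step inequality underlying their analysis. The projected-gradient lemma on the $x$-update yields a bound on $\eta_t\bigl(\mathcal{L}_t(x_t,\lambda_t)-\mathcal{L}_t(x^*,\lambda_t)\bigr)$ in terms of $\|x_t-x^*\|^2-\|x_{t+1}-x^*\|^2$ plus a squared-gradient term of order $\eta_t^2(1+\lambda_t^2)G^2$; the $-\frac{\theta_t}{2}\lambda^2$ penalty gives the standard strong-concavity slack on the dual side. Summing over $t$ with the stepsize schedules used in \cite{jenatton2016adaptive} (namely $\eta=\Theta(T^{-\beta})$ and $\theta=\Theta(T^{-(1-\beta)})$, or their time-varying analogues), and using that $[g(x^*)]_+=0$ for any feasible $x^*\in S$, reproduces the objective regret bound $O(T^{\max\{\beta,1-\beta\}})$ verbatim. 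To extract the constraint bound, I would then specialize $\lambda$ to the tuned value that maximizes the slack in the summed inequality, which upgrades it to a bound of order $T^{1-\beta/2}$ on $\sum_t[g(x_t)]_+$. Since $g(x)\ge g_i(x)$ for every $i$, we have $[g(x_t)]_+\ge [g_i(x_t)]_+\ge g_i(x_t)$ pointwise, so the same bound transfers simultaneously to $\sum_t[g_i(x_t)]_+$ and $\sum_t g_i(x_t)$ for all $i$, giving exactly the stated conclusion.

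The main obstacle is verifying that none of the manipulations in the original argument relied on smoothness or sign properties of $g(x)$ that are lost when passing to $[g(x)]_+$. Two places deserve care: (i) the convexity-of-Lagrangian-in-$x$ step, which uses a subgradient inequality — this is fine because $[\,\cdot\,]_+$ preserves convexity and the subdifferential is simply $\{0\}$ inside $\{g\le 0\}$ and $\partial g$ outside; and (ii) the step that produced $\sum_t g(x_t)$ in \cite{jenatton2016adaptive} by absorbing the $\lambda g(x_t)$ term, which here directly yields $\sum_t[g(x_t)]_+$ since the non-negativity of $[g(x_t)]_+$ eliminates a cancellation that was previously possible. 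Once these two points are checked, the remaining computation is the same arithmetic on the stepsize schedule as in \cite{jenatton2016adaptive}, and the bounds follow.
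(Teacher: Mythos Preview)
Your proposal is correct and follows essentially the same route as the paper: both argue that the proof structure of \cite{jenatton2016adaptive} (their Lemmas~1--4 and Theorem~1) carries over verbatim once $g(x)$ is replaced by $[g(x)]_+$, with the same two checkpoints --- convexity/Lipschitzness of the constraint term is preserved, and $[g(x^*)]_+=0$ for feasible $x^*$. One minor inaccuracy worth flagging: the $\lambda$-update in \cite{jenatton2016adaptive} is a projected gradient \emph{ascent step}, not exact maximization of $\mathcal{L}_t(x_{t+1},\cdot)$, so it does not ``reduce to the closed form'' $[g(x_{t+1})]_+/\theta_{t+1}$ as you claim; that closed form is the update of Algorithm~\ref{alg::alg1} in this paper, not of \cite{jenatton2016adaptive}. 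This slip does not damage your argument, since the dual-side bound (their Lemma~1) depends only on the concavity of $\mathcal{L}_t$ in $\lambda$ and the gradient-step mechanics, both of which are unchanged by the clipping.
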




\begin{figure}
\vskip 0.0in
  \centering
  \subfigure{
    \label{fig::toy_traj_beta_0.5} 
    \includegraphics[height=3.6cm]{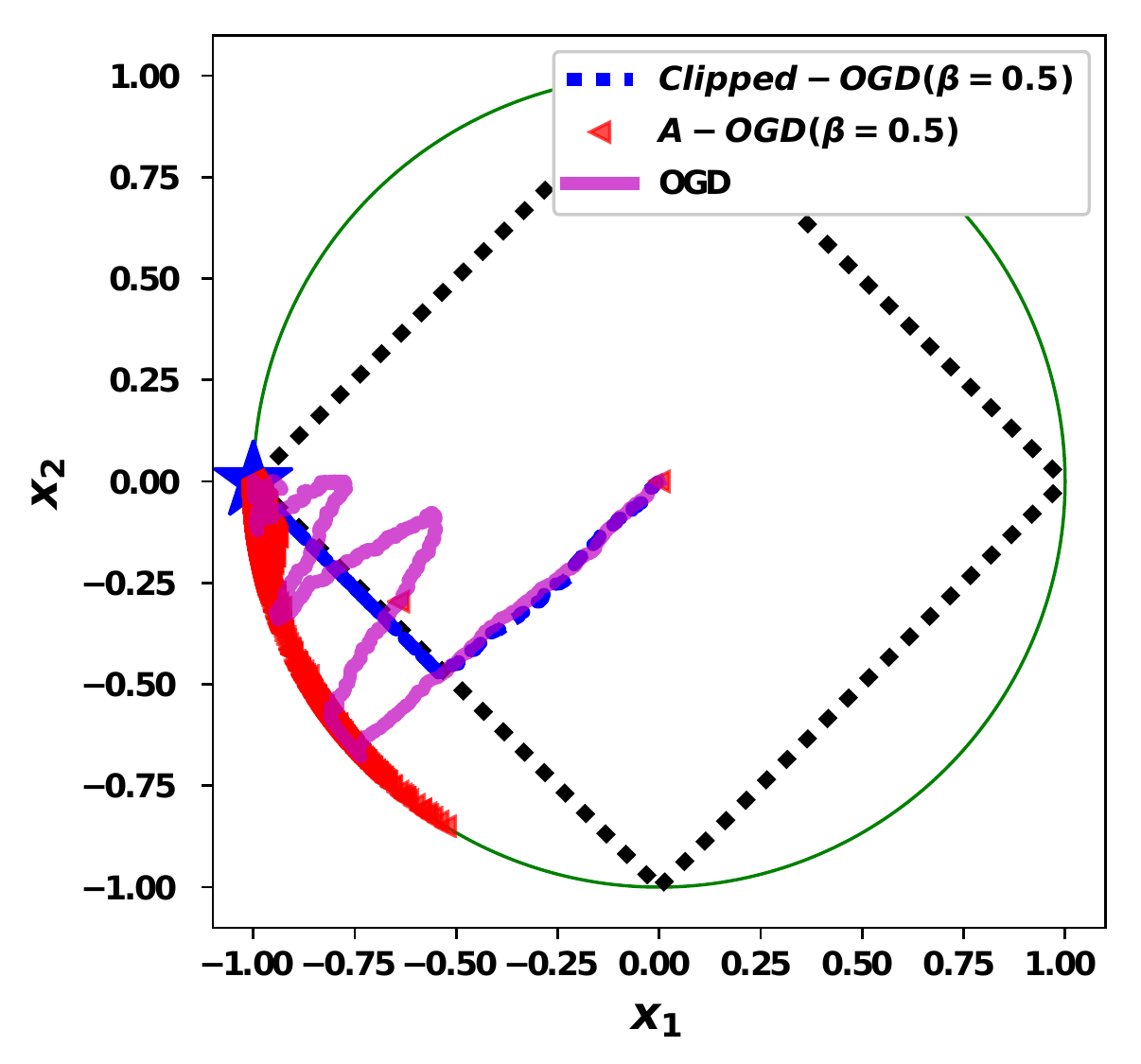}}
  \hspace{.6in}
  \subfigure{
    \label{fig::toy_traj_beta_2/3} 
    \includegraphics[height=3.6cm]{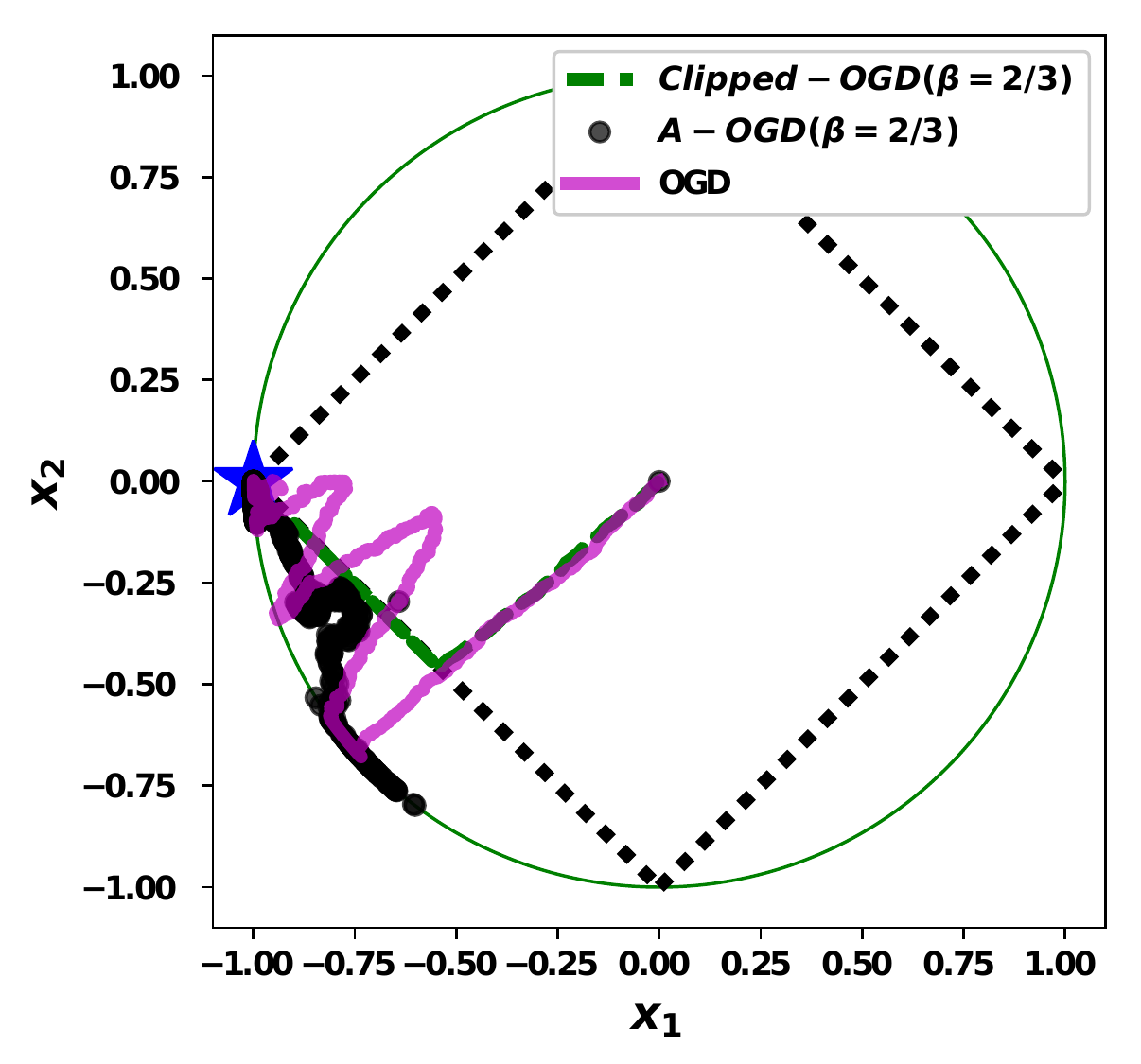}}
  \caption{Toy Example Results: Trajectories generated by different algorithms. 
    Note how trajectories generated by Clipped-OGD follow the
    desired constraints tightly. In contrast, OGD oscillates around
    the true constraints, and A-OGD closely follows the boundary of
    the outer ball.}
  \label{fig::toy_traj} 
\vskip 0.in
\end{figure}

Propositions \ref{prop::true_violation_bound_2011} and
\ref{prop::true_violation_bound_2016} show that clipped long-term
constraints can be bounded by combining the algorithms of
\cite{mahdavi2012trading,jenatton2016adaptive} with our augmented
Lagrangian. 
Although these results are similar in part to our Propositions \ref{prop::similarResultTo2012} and \ref{prop::tradeOffLossAndConstraint},
they do not imply the results in Theorems \ref{thm::sumOfSquareLongterm} and \ref{thm::stronglyconvex} 
as well as the new single step constraint violation bound in Lemma \ref{lem:bound_step}, which are our key contributions.
Based on Propositions \ref{prop::true_violation_bound_2011} and \ref{prop::true_violation_bound_2016},
it is natural to ask whether we could apply our new augmented Lagrangian formula (\ref{eq::new long term lagrangian})
to the recent work in \cite{yu2017online} .
%
Unfortunately, we have not found a way to do so. 


Furthermore, since $\Big([g_i(x_t)]_+\Big)^2$ is also convex, 
we could define $\tilde{g}_i(x_t) = \Big([g_i(x_t)]_+\Big)^2$ 
and apply the previous algorithms \cite{mahdavi2012trading} \cite{jenatton2016adaptive} and \cite{yu2017online}.
This will result in the upper bounds of $O(T^{3/4})$ \cite{mahdavi2012trading} and $O(T^{1-\beta/2})$ \cite{jenatton2016adaptive},
which are worse than our upper bounds of $O(T^{1/2})$ (Theorem \ref{thm::sumOfSquareLongterm})
and $O(T^{1-\beta})$ ( Proposition \ref{prop::tradeOffLossAndConstraint}).
Note that the algorithm in \cite{yu2017online} cannot be applied 
since the clipped constraints do not satisfy the required Slater condition.

\section{Experiments}

\begin{figure}
\vskip 0.0in
  \centering
  \subfigure[]{
    \label{fig::doubly_clip_con} 
    \includegraphics[height=3.cm]{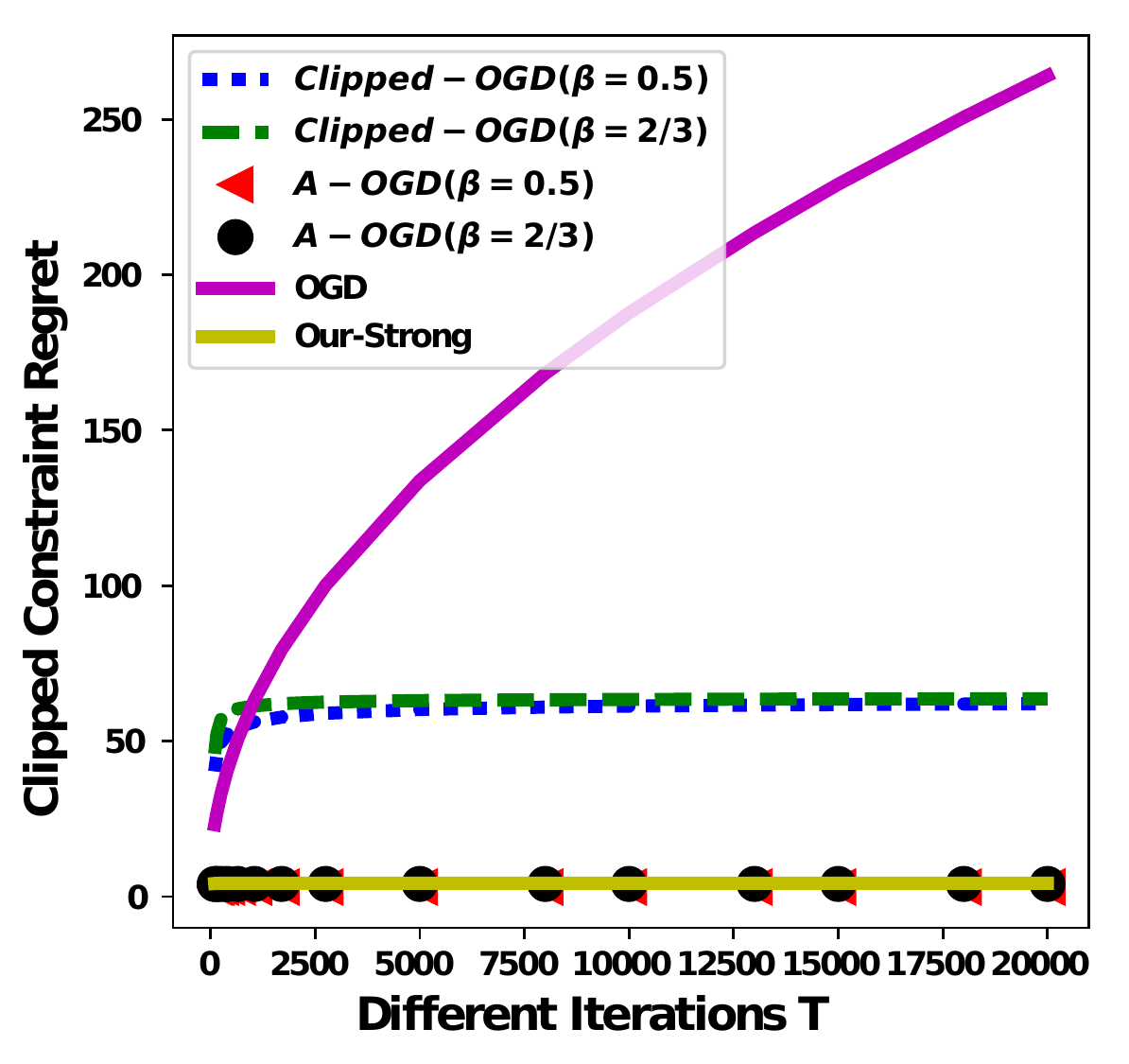}}
  \hspace{.3in}
  \subfigure[]{
    \label{fig::doubly_nonclip_con} 
    \includegraphics[height=3.cm]{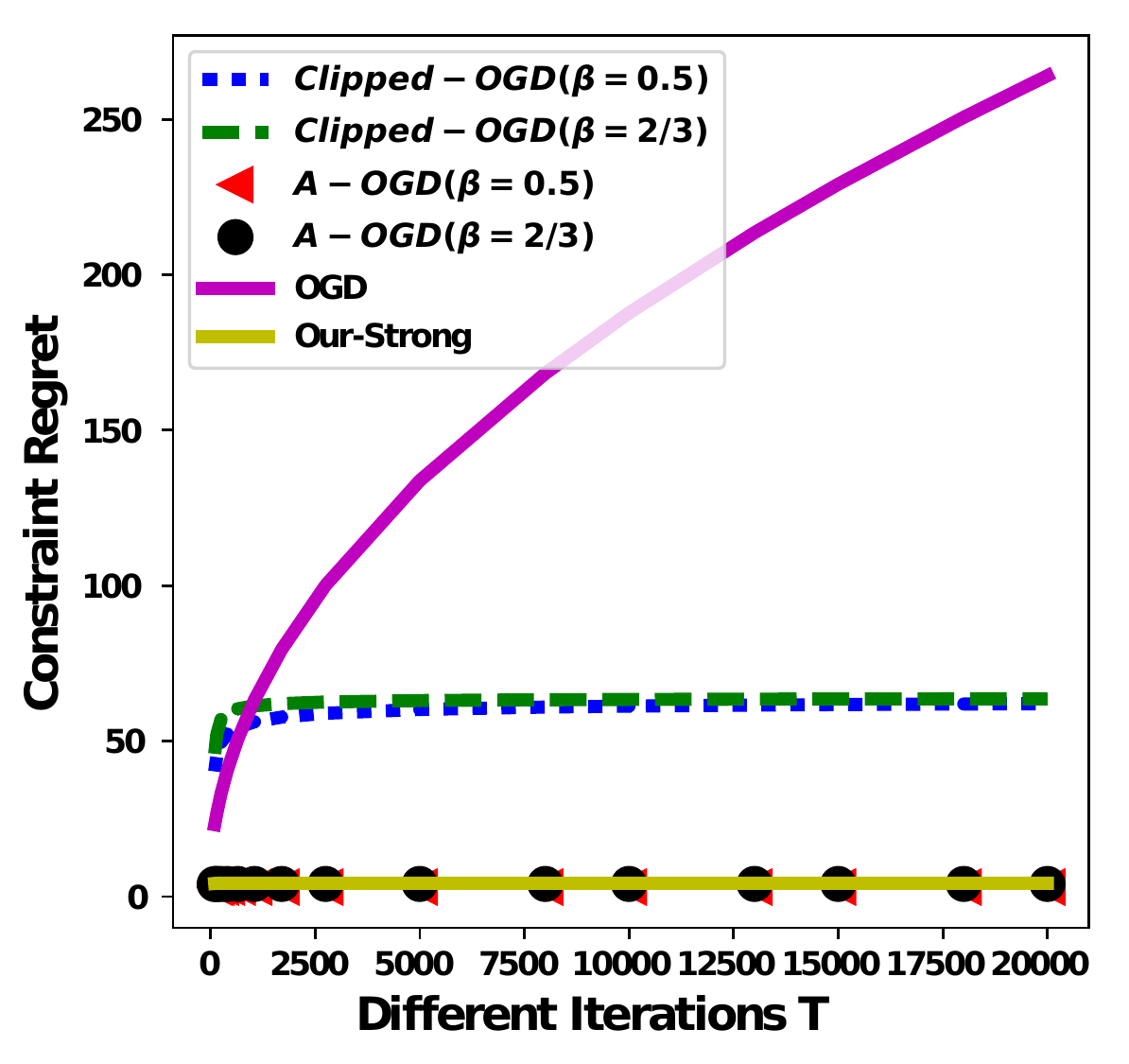}}
  \hspace{.3in}
  \subfigure[]{
    \label{fig::doubly_obj} 
    \includegraphics[height=3.cm]{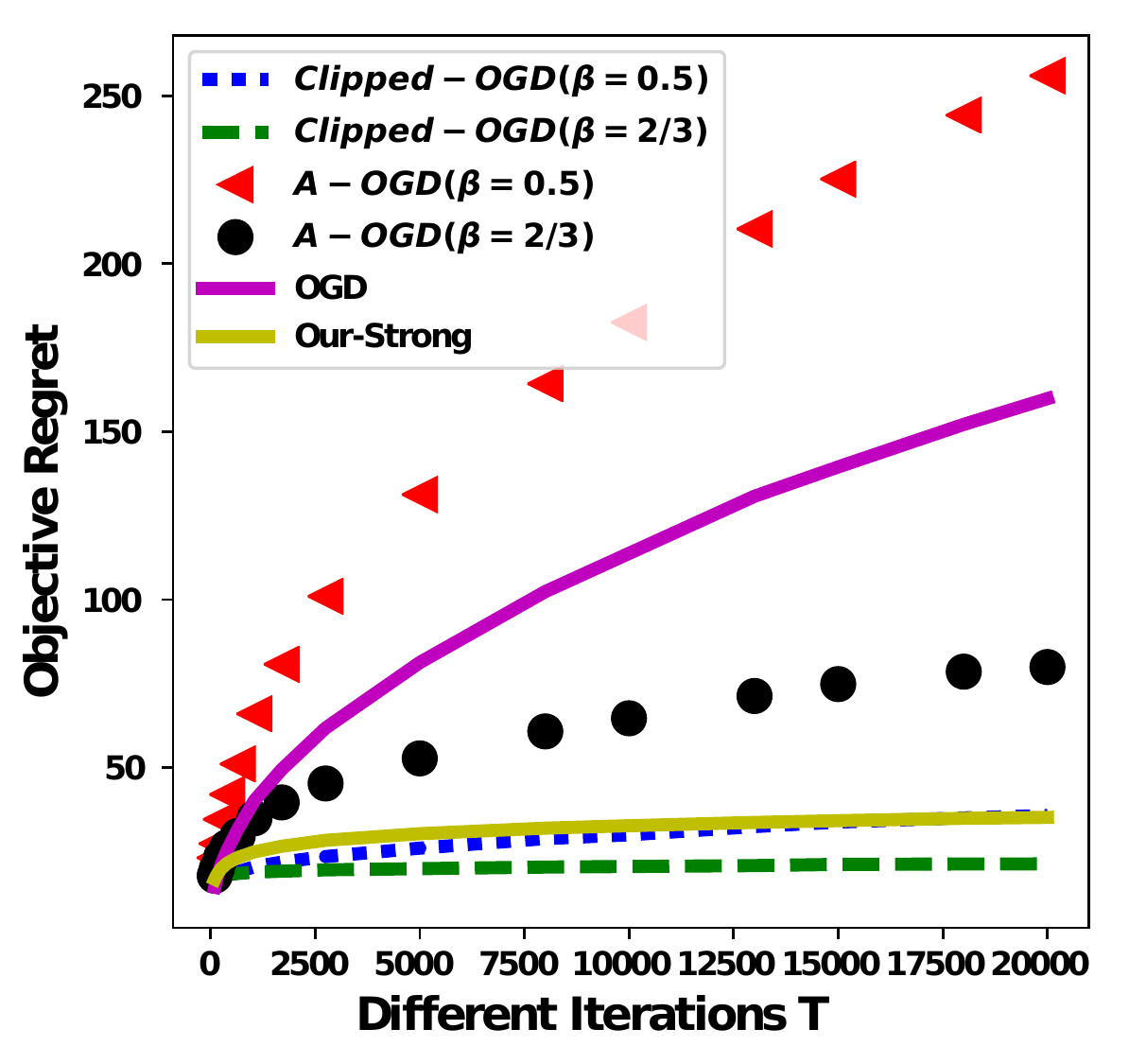}}
  \caption{\textbf{Doubly-Stochastic Matrices.} Fig.\ref{fig::doubly_clip_con}: Clipped Long-term Constraint Violation. 
           Fig.\ref{fig::doubly_nonclip_con}: Long-term Constraint Violation.
           Fig.\ref{fig::doubly_obj}: Cumulative Regret of the Loss function}
  \label{fig::doubly_obj_con} 
\vskip 0.in
\end{figure}

\begin{figure}
\vskip 0.0in
  \centering
  \subfigure[]{
    \label{fig::ed_demand} 
    \includegraphics[height=2.4cm]{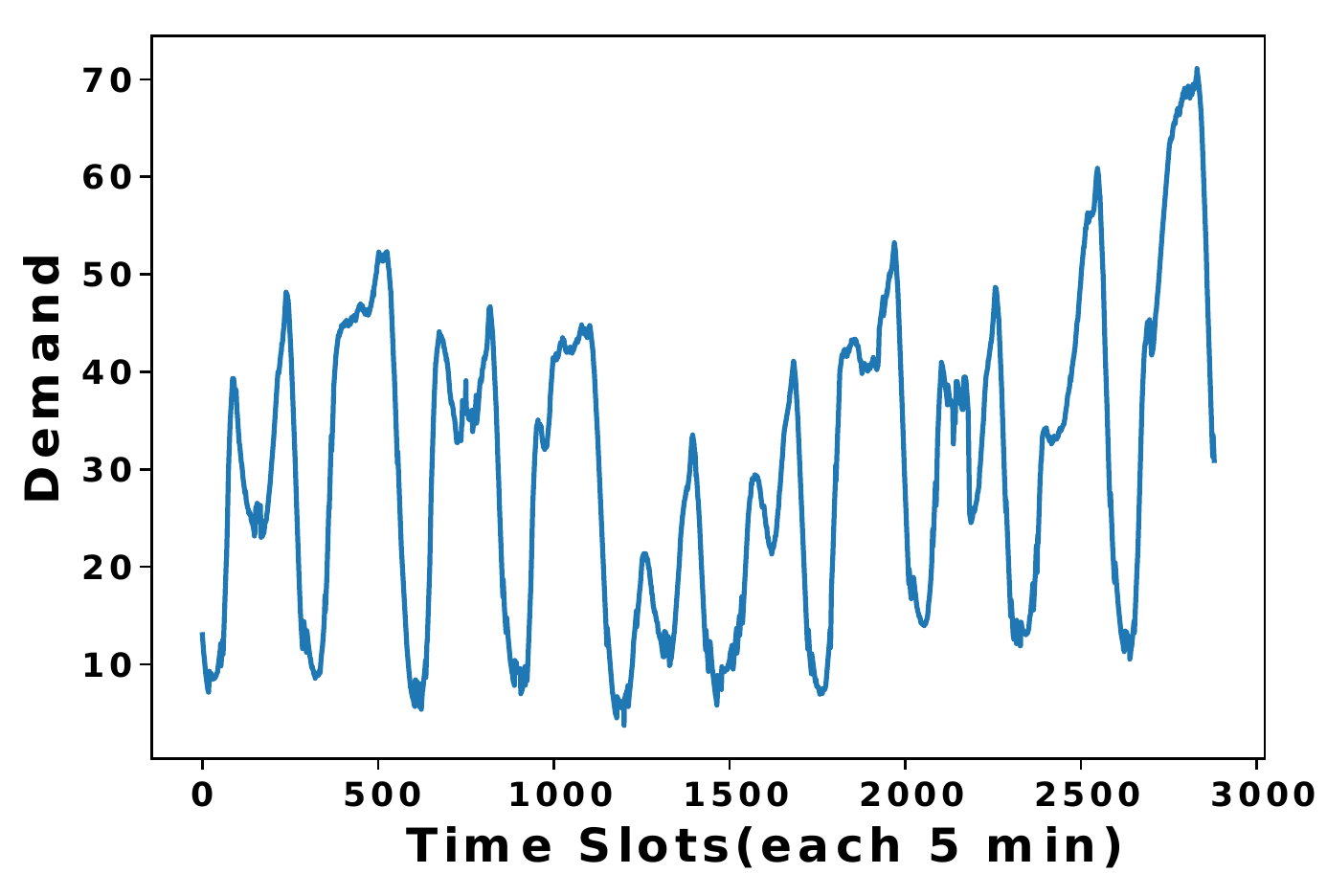}}
  \hspace{.1in}
  \subfigure[]{
    \label{fig::ed_con} 
    \includegraphics[height=3.1cm]{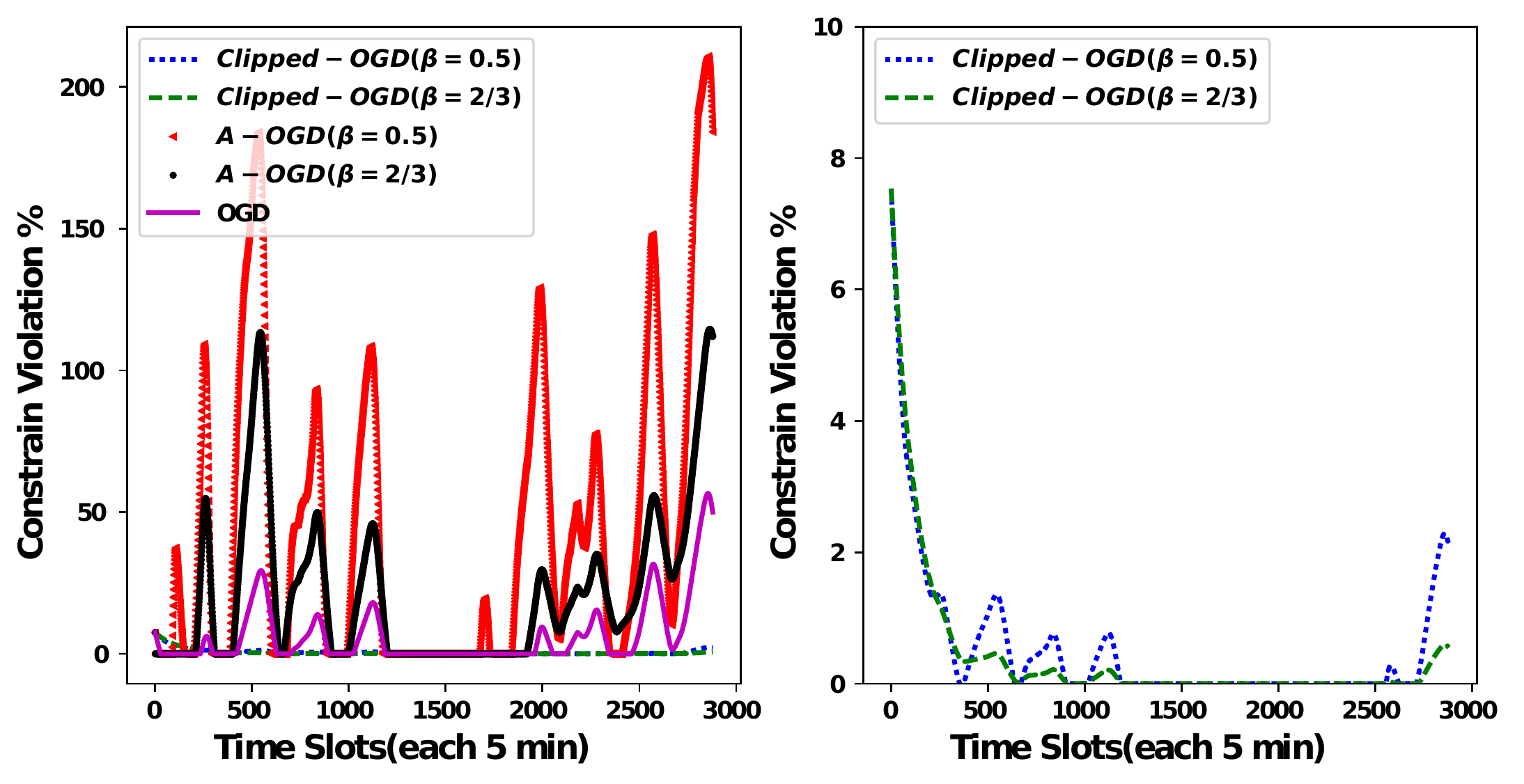}}
  \hspace{.1in}
  \subfigure[]{
    \label{fig::ed_obj} 
    \includegraphics[height=3.1cm]{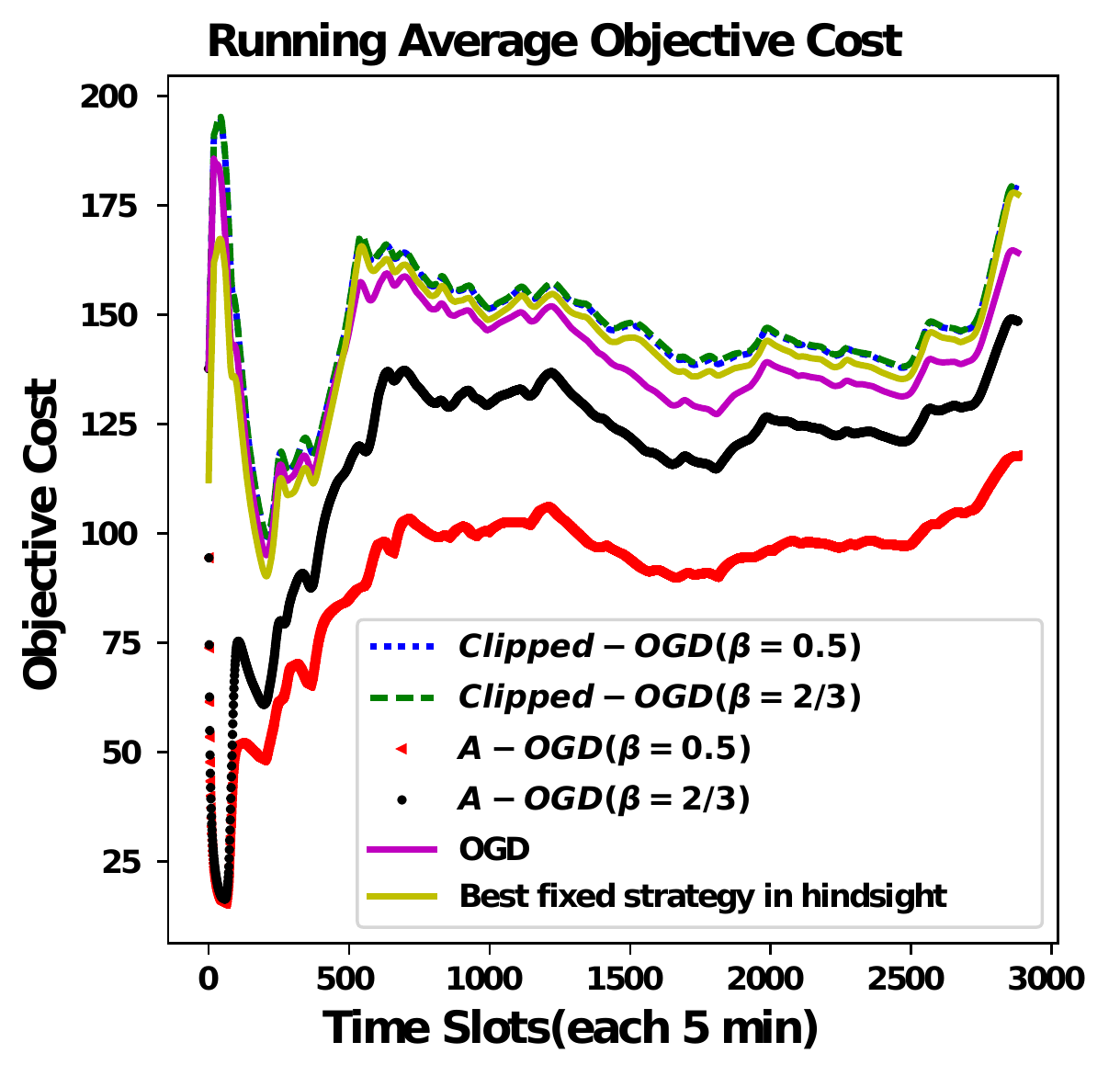}}
  \caption{\textbf{Economic Dispatch.}
           Fig.\ref{fig::ed_demand}: Power Demand Trajectory. 
           Fig.\ref{fig::ed_con}: Constraint Violation for each time
           step. All of the previous algorithms incurred substantial
           constraint violations. The figure on the right shows the
           violations of our algorithm, which are significantly smaller. 
           Fig.\ref{fig::ed_obj}: Running Average of the Objective Loss}
  \label{fig::ed_obj_con} 
\vskip 0.in
\end{figure}


In this section, we test the performance of the algorithms including
OGD \cite{mahdavi2012trading},
A-OGD \cite{jenatton2016adaptive}, Clipped-OGD (this paper),
and our proposed algorithm strongly convex case (Our-strong).
Throughout the experiments, our algorithm has the following fixed parameters:
$\alpha = 0.5$, $\sigma =\frac{(m+1)G^2}{2(1-\alpha)}$, $\eta = \frac{1}{T^\beta G\sqrt{R(m+1)}}$.
In order to better show the result of the constraint violation trajectories,
we aggregate all the constraints as a single one
by using $g(x_t) = \max_{i\in \{1,...,m\}}g_i(x_t)$ as done in \cite{mahdavi2012trading}.

\subsection{A Toy Experiment}

For illustration purposes, we solve the following 2-D toy experiment
with $x = [x_1,x_2]^T$:
\begin{equation}
\begin{array}{ll}
\min \sum\limits_{t=1}^T c_t^Tx,
&s.t.\left|x_1\right|+\left|x_2\right|-1\le 0.
\end{array}
\end{equation}
where the constraint is the $\ell_1$-norm constraint.
The vector $c_t$ is generated from a uniform
random vector over $[0,1.2]\times [0,1]$ which is rescaled to have
norm $1$. This leads to slightly average cost on the on the first coordinate.
The offline solutions for different $T$ are obtained by CVXPY \cite{cvxpy}.



All algorithms are run up to $T = 20000$ and are averaged over 10 random sequences of $\{c_t\}_{t=1}^T$.
Since the main goal here is to compare the variables' trajectories generated by different algorithms,
the results for different $T$ are in the supplementary material for space purposes.
Fig.\ref{fig::toy_traj} shows these trajectories for one realization with $T = 8000$.
The blue star is the optimal point's position. 

From Fig.\ref{fig::toy_traj} we can see that the trajectories generated by Clipped-OGD
follows the boundary very tightly until reaching the optimal point.
This can be explained by the Lemma \ref{lem:bound_step} which shows that 
the constraint violation for single step is also upper bounded.
For the OGD, the trajectory oscillates widely around the boundary of
the true constraint. 
For the A-OGD, its trajectory in Fig.\ref{fig::toy_traj} violates the
constraint most of the time,
and this violation actually contributes to the lower objective regret shown in the supplementary material.


\subsection{Doubly-Stochastic Matrices}

We also test the algorithms for approximation by doubly-stochastic matrices,
as in \cite{jenatton2016adaptive}:
\begin{equation}
\begin{array}{llll}
\min \sum\limits_{t=1}^T \frac{1}{2}\left\|Y_t-X\right\|_F^2
&s.t.\quad X\textbf{1} = \textbf{1},
     & X^T\textbf{1} = \textbf{1},
     & X_{ij}\ge 0.
\end{array}
\end{equation}
where $X\in \mathbb{R}^{d \times d}$ is the matrix variable, $\textbf{1}$ is the vector whose elements are all 1,
and matrix $Y_t$ is the permutation matrix which is randomly generated. 

After changing the equality constraints into inequality ones
(e.g.,$X\textbf{1} = \textbf{1}$ into $X\textbf{1} \ge \textbf{1}$ and $X\textbf{1} \le \textbf{1}$),
we run the algorithms with different T up to $T = 20000$ for 10 different random sequences of $\{Y_t\}_{t=1}^T$.
Since the objective function $f_t(x)$ is strongly convex
with parameter $H_1=1$, we also include our designed strongly convex algorithm as another comparison.
The offline optimal solutions are obtained by CVXPY \cite{cvxpy}.

The mean results for both constraint violation and objective regret
are shown in Fig.\ref{fig::doubly_obj_con}.
From the result we can see that,
for our designed strongly convex algorithm Our-Strong, its result is around the best ones in
not only the clipped constraint violation, 
but the objective regret. 
For our most-balanced convex case algorithm Clipped-OGD with 
$\beta = 0.5$, although its clipped constraint violation is relatively bigger than A-OGD,
it also becomes quite flat quickly, which means the algorithm quickly converges to a feasible solution.


\subsection{Economic Dispatch in Power Systems}
This example is adapted from \cite{li2018online} and \cite{senthil2010economic},
which considers the problem of power dispatch. That is, at each time step $t$,
we try to minimize the power generation cost $c_i(x_{t,i})$ for each generator $i$ while maintaining the
power balance $\sum\limits_{i=1}^n x_{t,i} = d_t$, where $d_t$ is the power demand at time $t$.
Also, each power generator produces an emission level $E_i(x_{t,i})$.
To bound the emissions, we impose the constraint $\sum\limits_{i=1}^n E_i(x_{t,i})\le E_{max}$. 
In addition to requiring this constraint to be satisfied on
average, 
we also require bounded constraint violations at each timestep. The
problem is formally stated as:
\begin{equation}
\begin{array}{lll}
\min \sum\limits_{t=1}^T \Big(\sum\limits_{i=1}^n c_i(x_{t,i})+\xi (\sum\limits_{i=1}^n x_{t,i}-d_t)^2 \Big),
&s.t. \quad \sum\limits_{i=1}^n E_i(t,i) \le E_{max},
     & 0\le x_{t,i} \le x_{i,max}.
\end{array}
\end{equation}
where the second constraint is from the fact that each generator has the power generation limit.

In this example, we use three generators. We define the cost and
emission functions according to \cite{senthil2010economic} and
\cite{li2018online} as $c_i(x_{t,i}) = 0.5a_i x_{t,i}^2+b_i x_{t,i}$,
and $E_i = d_i x_{t,i}^2+e_ix_{t,i}$, respectively.
The parameters are: $a_1 = 0.2, a_2 = 0.12, a_3 = 0.14$, $b_1 = 1.5, b_2 = 1, b_3 = 0.6$, $d_1 = 0.26, d_2 = 0.38, d_3 = 0.37$,
$E_{max} = 100$, $\xi = 0.5$, and $x_{1,max} = 20, x_{2,max} = 15, x_{3,max} = 18$.
The demand $d_t$ is adapted from real-world 5-minute interval demand data between 04/24/2018 and 05/03/2018
\footnote{https://www.iso-ne.com/isoexpress/web/reports/load-and-demand}, which is shown in Fig.\ref{fig::ed_demand}.
The offline optimal solution or best fixed strategy in hindsight is
obtained by an implementation of SAGA \cite{defazio2014saga}. 
The constraint violation for each time step is shown in Fig.\ref{fig::ed_con},
and the running average objective cost is shown in Fig.\ref{fig::ed_obj}.
From these results we can see that our algorithm has very small
constraint violation for each time step, 
which is desired by the requirement. Furthermore, our objective costs
are very close to the best fixed strategy.

\section{Conclusion}
In this paper, we propose two algorithms for OCO with both convex and strongly convex objective functions.
By applying different update strategies that utilize a modified augmented Lagrangian function,
they can solve OCO with a squared/clipped long-term constraints requirement.
The algorithm for general convex case provides the useful bounds for
both  the long-term constraint violation and the constraint violation
at each timestep.
Furthermore, the bounds for the strongly convex case is an improvement compared with the previous efforts in the literature.
Experiments show that our algorithms can follow the constraint boundary tightly and
have relatively smaller clipped long-term constraint violation with reasonably low objective regret.
It would be useful if future work could explore the noisy versions of the constraints and obtain the similar upper bounds.

\section*{Acknowledgments}

Thanks to Tianyi Chen for valuable discussions about algorithm's properties. 

\bibliography{OCO}

\bibliographystyle{plainnat}

\newpage

\begin{center}
\textbf{\large Supplemental Materials}
\end{center}

\appendix

The supplemental material contains proofs of the main results of the
paper along with supporting results.

\section{Toy Example Results}

The results including different $T$ up to $20000$ are shown in Fig.\ref{fig::toy_obj_con}, 
whose results are averaged over 10 random sequences of $\{c_t\}_{t=1}^T$.
Since the standard deviations are small, we only plot the mean results. 

From Fig.\ref{fig::toy_traj} we can see that the trajectories generated by $Clipped-OGD$
follows the boundary very tightly until reaching the optimal point.
which is also reflected by the Fig.\ref{fig::toy_clip_con} of the clipped long-term constraint violation.
For the $OGD$, its trajectory oscillates a lot around the boundary of
the actual constraint. 
And if we examine the clipped and non-clipped constraint violation in Fig.\ref{fig::toy_obj_con},
we find that although the clipped constraint violation is very high, its non-clipped one is very small.
This verifies the statement we make in the beginning that the big constraint violation at one time step is canceled out 
by the strictly feasible constraint at the other time step.
For the $A-OGD$, its trajectory in Fig.\ref{fig::toy_traj} violates the constraint most of the time,
and this violation actually contributes to the lower objective regret shown in Fig.\ref{fig::toy_obj_con}.

\begin{figure}
\vskip 0.0in
  \centering
  \subfigure[]{
    \label{fig::toy_clip_con} 
    \includegraphics[height=3.9cm]{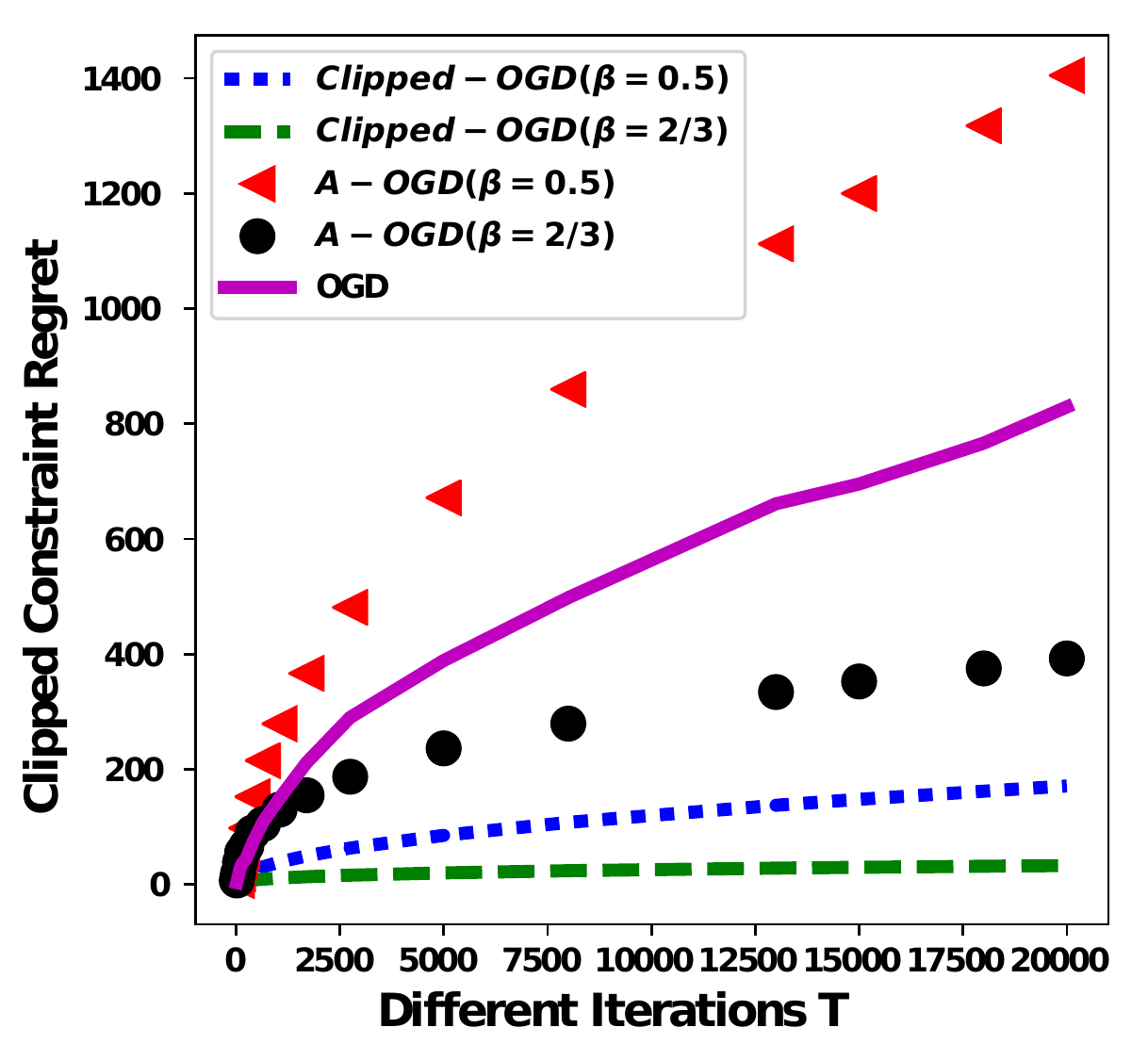}}
  \hspace{.1in}
  \subfigure[]{
    \label{fig::toy_nonclip_con} 
    \includegraphics[height=3.9cm]{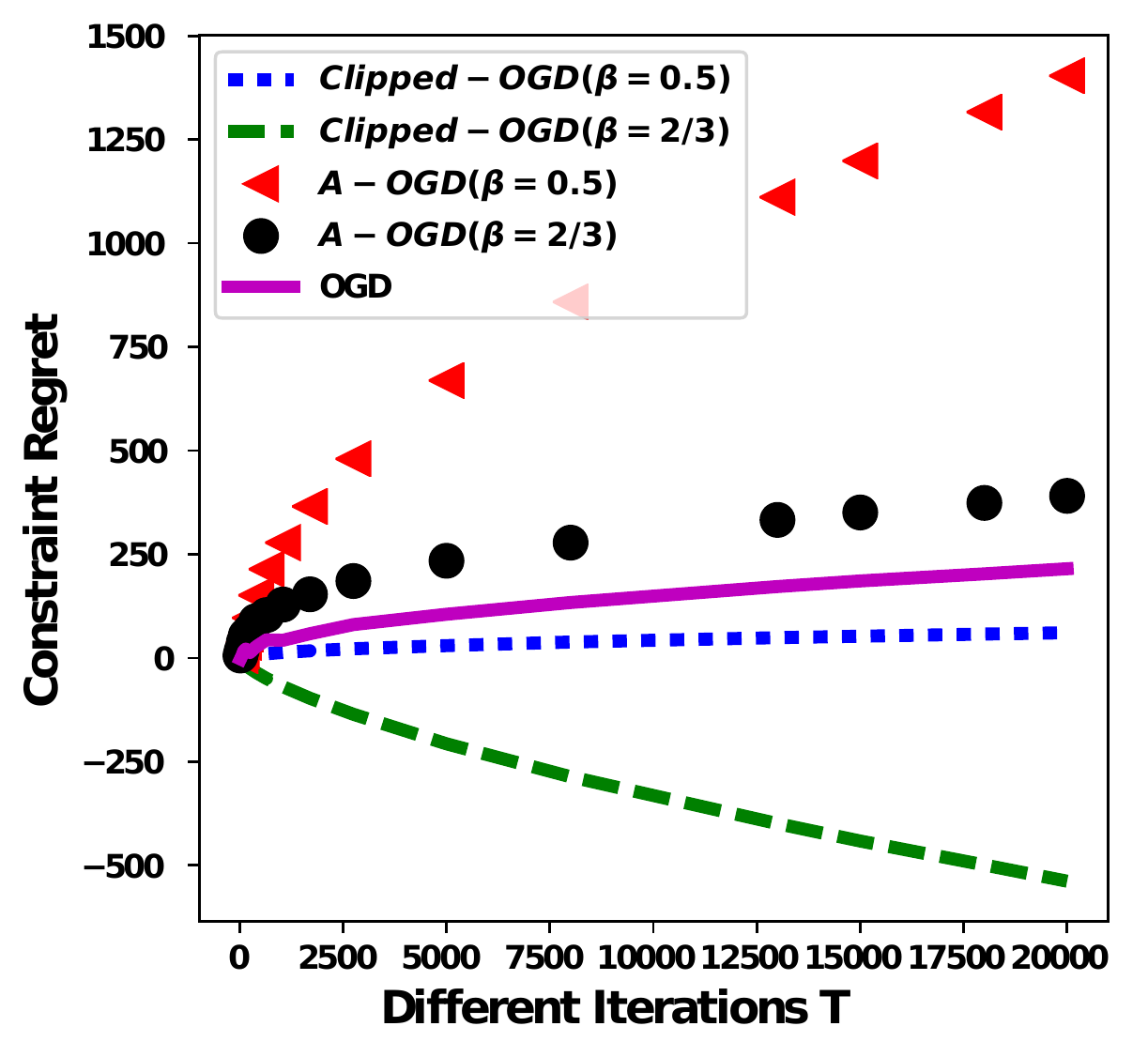}}
  \hspace{.1in}
  \subfigure[]{
    \label{fig::toy_obj} 
    \includegraphics[height=3.9cm]{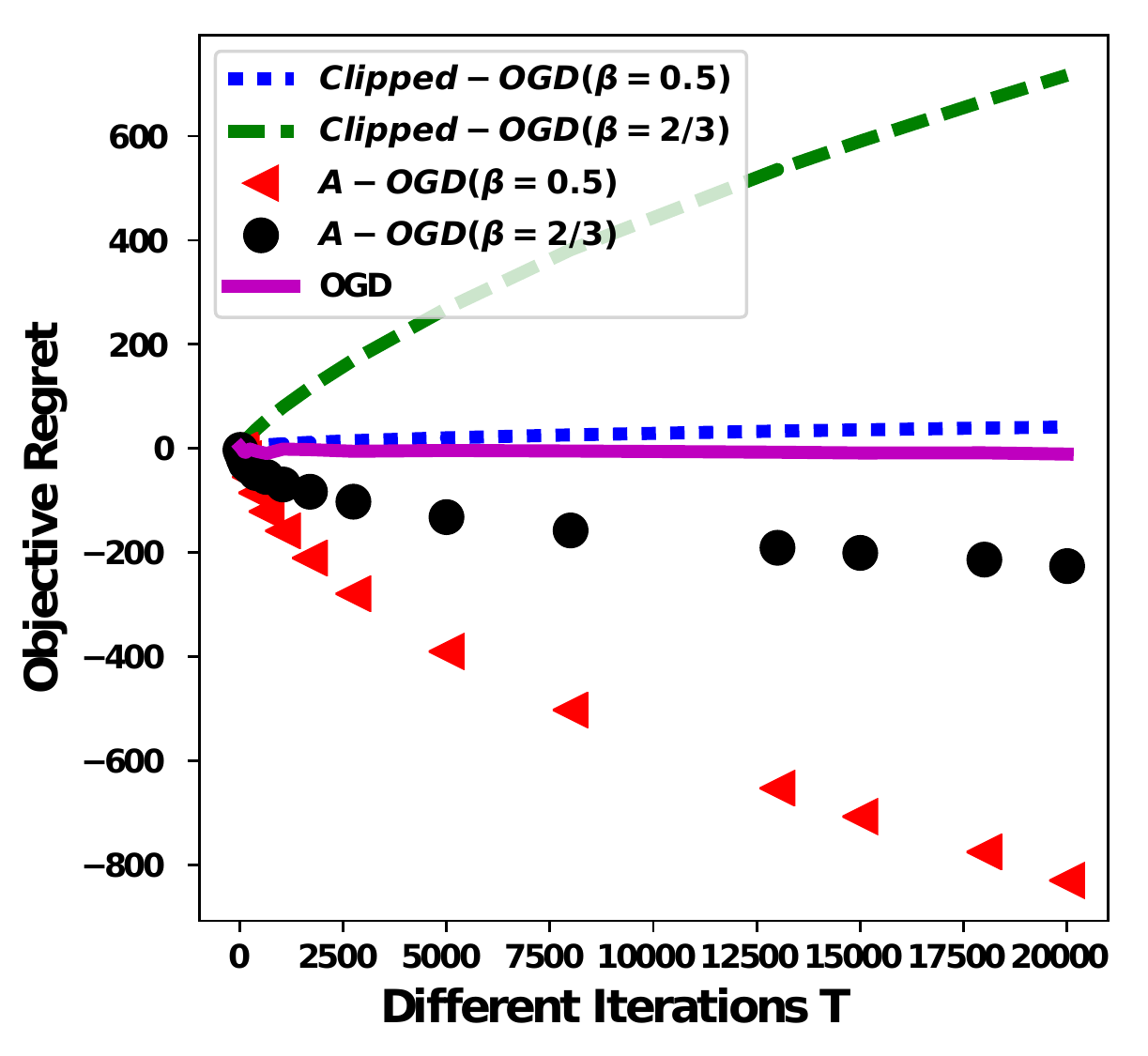}}
  \caption{Toy Example Results: 
           Fig.\ref{fig::toy_clip_con}: Clipped Long-term Constraint Violation. 
           Fig.\ref{fig::toy_nonclip_con}: Long-term Constraint Violation.
           Fig.\ref{fig::toy_obj}: Cumulative Regret of the Loss function}
  \label{fig::toy_obj_con} 
\vskip 0.in
\end{figure}

\section{Proof of Theorem 1}
Before proving Theorem~\ref{thm::sumOfSquareLongterm}, we need the
following preliminary result. 

\begin{lemma}
  \label{lem:sumOfLagfunction}
  {\it
    For the sequence of $x_t$, $\lambda_t$ obtained from Algorithm \ref{alg::alg1} and $\forall x\in\mathcal{B}$, we can prove the following inequality:
    \begin{equation}
    \begin{array}{rl}
    \sum\limits_{t=1}^T[\mathcal{L}_t(x_t,\lambda_t)-\mathcal{L}_t(x,\lambda_t)]&\le 
    \frac{R^2}{2\eta}+\frac{\eta T}{2}(m+1)G^2 \\
    &+\frac{\eta}{2}(m+1)G^2\sum\limits_{t=1}^T\left\|\lambda_t\right\|^2
    \end{array}
    \end{equation}

  }
\end{lemma}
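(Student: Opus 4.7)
The plan is to run a standard online-gradient-descent style analysis on the sequence $\{x_t\}$, viewing $\mathcal{L}_t(\cdot,\lambda_t)$ as the convex loss function at round $t$ (with $\lambda_t$ frozen, the term $-\tfrac{\sigma\eta}{2}\|\lambda\|^2$ is a constant and the map $x\mapsto \lambda_t^i [g_i(x)]_+$ is convex because $[\cdot]_+$ is nondecreasing and $g_i$ is convex, so $\mathcal{L}_t(\cdot,\lambda_t)$ is convex). Convexity gives, for any fixed $x\in\mathcal{B}$,
\begin{equation*}
\mathcal{L}_t(x_t,\lambda_t)-\mathcal{L}_t(x,\lambda_t)\le \langle \partial_x\mathcal{L}_t(x_t,\lambda_t),\,x_t-x\rangle.
\end{equation*}

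Next I would apply the usual projected-subgradient one-step inequality. Since $x_{t+1}=\Pi_{\mathcal{B}}(x_t-\eta\,\partial_x\mathcal{L}_t(x_t,\lambda_t))$, non-expansiveness of $\Pi_\mathcal{B}$ on $x\in\mathcal{B}$ and expansion of the square yield
\begin{equation*}
\|x_{t+1}-x\|^2\le \|x_t-x\|^2-2\eta\langle \partial_x\mathcal{L}_t(x_t,\lambda_t),x_t-x\rangle+\eta^2\|\partial_x\mathcal{L}_t(x_t,\lambda_t)\|^2,
\end{equation*}
so that $\langle \partial_x\mathcal{L}_t(x_t,\lambda_t),x_t-x\rangle\le \tfrac{1}{2\eta}(\|x_t-x\|^2-\|x_{t+1}-x\|^2)+\tfrac{\eta}{2}\|\partial_x\mathcal{L}_t(x_t,\lambda_t)\|^2$. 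Summing over $t=1,\dots,T$ telescopes the first piece and, after dropping the nonpositive $-\|x_{T+1}-x\|^2$ term, bounds it by $\|x_1-x\|^2/(2\eta)\le R^2/(2\eta)$ using that $x_1$ is the center of $\mathcal{B}=R\mathbb{K}$ and $x\in\mathcal{B}$.

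The only remaining task is to control $\|\partial_x\mathcal{L}_t(x_t,\lambda_t)\|^2$. Write $\partial_x\mathcal{L}_t(x_t,\lambda_t)=\partial_x f_t(x_t)+\sum_{i=1}^m \lambda_t^i\,\partial_x([g_i(x_t)]_+)$, a sum of $m+1$ vectors. The elementary inequality $\|\sum_{i=0}^m v_i\|^2\le (m+1)\sum_{i=0}^m\|v_i\|^2$ combined with the Lipschitz bounds $\|\partial_x f_t(x_t)\|\le L_f\le G$ and $\|\partial_x([g_i(x_t)]_+)\|\le L_g\le G$ (the subgradient of $[g_i]_+$ is either $0$ or $\nabla g_i$) gives
\begin{equation*}
\|\partial_x\mathcal{L}_t(x_t,\lambda_t)\|^2\le (m+1)G^2\bigl(1+\|\lambda_t\|^2\bigr).
\end{equation*}
Summing over $t$ produces the advertised $\tfrac{\eta T}{2}(m+1)G^2$ and $\tfrac{\eta}{2}(m+1)G^2\sum_t\|\lambda_t\|^2$ contributions.

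There is essentially no hard step here; the proof is a direct adaptation of Zinkevich-style OGD regret analysis. The only subtlety to be careful about is that the losses $\mathcal{L}_t(\cdot,\lambda_t)$ have a $\lambda_t$-dependent Lipschitz constant, so the gradient-norm bound cannot be absorbed into a single constant and must instead be carried through as the $\sum_t\|\lambda_t\|^2$ term in the final inequality; this dependence is what later forces the cancellation argument (via the $\lambda_{t+1}=[g(x_{t+1})]_+/(\sigma\eta)$ update) that yields Theorem \ref{thm::sumOfSquareLongterm}.
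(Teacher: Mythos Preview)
Your proposal is correct and follows essentially the same route as the paper's own proof: convexity of $\mathcal{L}_t(\cdot,\lambda_t)$ to linearize, the standard projected-subgradient one-step inequality via non-expansiveness of $\Pi_{\mathcal{B}}$, the bound $\|\partial_x\mathcal{L}_t(x_t,\lambda_t)\|^2\le (m+1)G^2(1+\|\lambda_t\|^2)$ obtained from the $(m+1)$-term Cauchy--Schwarz inequality, and telescoping with $\|x_1-x\|^2\le R^2$. There is no substantive difference in method or detail.
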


\begin{proof}
First, $\mathcal{L}_t(x,\lambda)$ is convex in $x$. Then for any $x\in\mathcal{B}$,
we have the following inequality:
\begin{equation}
\mathcal{L}_t(x_t,\lambda_t)-\mathcal{L}_t(x,\lambda_t) \le (x_t-x)^T\partial_x\mathcal{L}_t(x_t,\lambda_t)
\end{equation}
Using the non-expansive property of the projection operator and the update rule for $x_{t+1}$ in Algorithm \ref{alg::alg1}, 
we have
\begin{equation}
\label{eq::x_projection_inequality}
\begin{array}{ll}
\left\|x-x_{t+1}\right\|^2& \le \left\|x-(x_t-\eta\partial_x \mathcal{L}_t(x_t,\lambda_t))\right\|^2 \\
                          & = \left\|x-x_t\right\|^2-2\eta(x_t-x)^T\partial_x\mathcal{L}_t(x_t,\lambda_t) \\
                          & \quad +\eta^2\left\|\partial_x\mathcal{L}_t(x_t,\lambda_t)\right\|^2  
\end{array}
\end{equation}
Then we have
\begin{equation}
\label{eq::L_t_diff_inequality}
\small
\begin{array}{rl}
\mathcal{L}_t(x_t,\lambda_t)-\mathcal{L}_t(x,\lambda_t) & \le \frac{1}{2\eta}\Big(\left\|x-x_t\right\|^2-\left\|x-x_{t+1}\right\|^2\Big) \\
                                                        & \quad +\frac{\eta}{2}\left\|\partial_x\mathcal{L}_t(x_t,\lambda_t)\right\|^2
\end{array}
\end{equation}

Furthermore, for $\left\|\partial_x\mathcal{L}_t(x_t,\lambda_t)\right\|^2$, we have
\begin{equation}
\label{eq::grad_lag_inequality}
\begin{array}{rl}
\left\|\partial_x\mathcal{L}_t(x_t,\lambda_t)\right\|^2 &= \left\|\partial_xf_t(x_t) + \sum\limits_{i=1}^m\lambda_t^i\partial_x([g_i(x_t)]_+)\right\|^2 \\
                                                        & \le (m+1)G^2(1+\left\|\lambda_t\right\|^2)

\end{array}
\end{equation}
where the last inequality is from the inequality that $(y_1+y_2+...+y_n)^2\le n(y_1^2+y_2^2+...+y_n^2)$, 
and both $\left\|\partial_xf_t(x_t)\right|$ and $\left\|\partial_x([g_i(x_t)]_+)\right\|$ are less than or equal to $G$ by the definition.

Then we have 
\begin{equation}
\small
\begin{array}{rl}
\mathcal{L}_t(x_t,\lambda_t)-\mathcal{L}_t(x,\lambda_t)& \le \frac{1}{2\eta}\Big(\left\|x-x_t\right\|^2-\left\|x-x_{t+1}\right\|^2\Big) \\
                                                       & \quad +\frac{\eta}{2}(m+1)G^2(1+\left\|\lambda_t\right\|^2)

\end{array}
\end{equation}

Since $x_1$ is in the center of $\mathcal{B}$, we can assume $x_1 = 0$ without loss of generality. 
If we sum the $\mathcal{L}_t(x_t,\lambda_t)-\mathcal{L}_t(x,\lambda_t)$ from 1 to $T$, we have
\begin{equation}
\scriptsize
\begin{array}{ll}
\sum\limits_{t=1}^T[\mathcal{L}_t(x_t,\lambda_t)-\mathcal{L}_t(x,\lambda_t)]& \le \frac{1}{2\eta}\Big(\left\|x-x_1\right\|^2-\left\|x-x_{T+1}\right\|^2\Big) \\
                                                & \quad +\frac{\eta T}{2}(m+1)G^2 \\
                                                & \quad +\frac{\eta}{2}(m+1)G^2\sum\limits_{t=1}^T\left\|\lambda_t\right\|^2 \\
                                                & \le \frac{R^2}{2\eta}+\frac{\eta T}{2}(m+1)G^2 \\
                                                & \quad +\frac{\eta}{2}(m+1)G^2\sum\limits_{t=1}^T\left\|\lambda_t\right\|^2

\end{array}
\end{equation}
where the last inequality follows from the fact that $x_1 = 0$ and $\left\|x\right\|^2\le R^2$.
\end{proof}

Now we are ready to prove the main theorem.

\begin{proof}[Proof of Theorem~\ref{thm::sumOfSquareLongterm}]
From Lemma \ref{lem:sumOfLagfunction}, we have 
\begin{equation}
\begin{array}{rl}
  \sum\limits_{t=1}^T[\mathcal{L}_t(x_t,\lambda_t)-\mathcal{L}_t(x,\lambda_t)]&\le 
  \frac{R^2}{2\eta}+\frac{\eta T}{2}(m+1)G^2 \\
  &+\frac{\eta}{2}(m+1)G^2\sum\limits_{t=1}^T\left\|\lambda_t\right\|^2
\end{array}
\end{equation}

If we expand the terms in the left-hand side and move the last term in right-hand side to the left,
we have
\begin{equation}
\small
\begin{array}{l}
\sum\limits_{t=1}^T\Big(f_t(x_t) - f_t(x)\Big) +
\sum\limits_{t=1}^T\sum\limits_{i=1}^m\Big(\lambda_t^i[g_i(x_t)]_+ -
\lambda_t^i[g_i(x)]_+\Big) \\
-\frac{\eta}{2}(m+1)G^2\sum\limits_{t=1}^T\left\|\lambda_t\right\|^2
\le \frac{R^2}{2\eta}+\frac{\eta T}{2}(m+1)G^2
\end{array}
\end{equation}

If we set $x = x^*$ to have $[g_i(x^*)]_+ = 0$ 
and plug in the expression $\lambda_t = \frac{[g(x_t)]_+}{\sigma\eta}$,
we have

\begin{equation}
\label{eq::UpperBoundOfsumOfObjAndLongTermConstrain}
\small
\begin{array}{rl}
\sum\limits_{t=1}^T\Big(f_t(x_t) - f_t(x^*)\Big) & +
\sum\limits_{i=1}^m\sum\limits_{t=1}^T\frac{([g_i(x_t)]_+)^2}{\sigma\eta}\Big(1-\frac{(m+1)G^2}{2\sigma}\Big) \\
&\le \frac{R^2}{2\eta}+\frac{\eta T}{2}(m+1)G^2

\end{array}
\end{equation}

If we plug in the expression for $\sigma$ and $\eta$, we have 

\begin{equation}
\small
\begin{array}{rl}
\sum\limits_{t=1}^T\Big(f_t(x_t) - f_t(x^*)\Big) & +
\sum\limits_{i=1}^m\sum\limits_{t=1}^T\frac{([g_i(x_t)]_+)^2}{\sigma\eta}\alpha \\
&\le O(\sqrt{T})

\end{array}
\end{equation}

Because $\frac{([g_i(x_t)]_+)^2}{\sigma\eta}\alpha\ge 0$, we have
\begin{equation}
\sum\limits_{t=1}^T\Big(f_t(x_t) - f_t(x^*)\Big) \le O(\sqrt{T})
\end{equation} 

Furthermore, we have $\sum\limits_{t=1}^T\Big(f_t(x_t) - f_t(x^*)\Big)\ge -FT$
according to the assumption.
Then we have

\begin{equation}
\begin{array}{l}
\sum\limits_{i=1}^m\sum\limits_{t=1}^T\Big([g_i(x_t)]_+\Big)^2  \le \frac{\sigma\eta}{\alpha}(O(\sqrt{T})+FT)\\
    = \frac{\sigma}{\alpha}(O(\sqrt{T})+FT)O(\frac{1}{\sqrt{T}}) = O(\sqrt{T})
\end{array}
\end{equation}

Because $\Big([g_i(x_t)]_+\Big)^2 \ge 0$, we have 
\begin{equation}
\sum\limits_{t=1}^T\Big([g_i(x_t)]_+\Big)^2 \le O(\sqrt{T}), \forall i \in \{1,2,...,m\}
\end{equation}
\end{proof}

\section{Proof of Lemma 1}

\begin{proof}
Recall that the update for $x_{t+1}$ is
\begin{equation}
\label{eq::update_x_lemm1}
x_{t+1} = \Pi_{\mathcal{B}}\Big(x_t-\eta \partial_x f_t(x_t) - \frac{[g(x_t)]_+}{\sigma} \partial_x ([g(x_t)]_+)\Big)
\end{equation}
Let $y_t=x_t-\eta \partial_x f_t(x_t) - \frac{[g(x_t)]_+}{\sigma} \partial_x ([g(x_t)]_+)$.

We first need to show that $g(x_{t+1})\le g(y_t)$.
Without loss of generality, let us assume that $y_t$ is not in the set $\mathcal{B}$.
From convexity we have $g(y_t)\ge g(x_{t+1})+\nabla_x g(x_{t+1})^T(y_t-x_{t+1})$.
From non-expansiveness of the projection operator, we have that 
$(y_t-x_{t+1})^T(x-x_{t+1})\le 0$ for $x\in \mathcal{B}$. 
Let $x = x_{t+1} - \epsilon_0\nabla_x g(x_{t+1})$ with $\epsilon_0$ small enough to make $x\in \mathcal{B}$.
We have $-\epsilon_0 (y_t-x_{t+1})^T\nabla_x g(x_{t+1}) \le 0$. 
Then we have $g(x_{t+1})\le g(y_t)$.

As a result, if $g(y_t)$ is upper bounded,  then so is $g(x_{t+1})$,
where $x_{t+1} = \Pi_{\mathcal{B}}(y_t)$. 
If $T$ is large enough,
$\eta\left\|\partial_x f_t(x_t)\right\|$ would be very small. 
Thus, we can use $0$-order Taylor expansion for differentiable $g(x)$ as below:
\begin{equation}
  \label{eq:simpleTaylor}
\begin{array}{ll}
g(y_t) = &g\Big(x_t-\eta\partial_x
           f_t(x_t)-\frac{[g(x_t)]_+}{\sigma} \partial_x
           ([g(x_t)]_+)\Big)
  \\
  &\le g\Big(x_t-\frac{[g(x_t)]_+}{\sigma} \partial_x
    ([g(x_t)]_+)\Big)+C \eta 
\end{array}
\end{equation}
where $C$ is a constant determined by the Taylor expansion remainder, as well as
the bound $\|\partial_x [g(x_t)]_+\| \|\partial_X f(x_t)\| \le G^2$.



Set $\epsilon =(2 C \sigma R^2\eta)^{1/3}=O(\frac{1}{T^{1/6}})$. We will show that if $g(x_t) <
\epsilon$, then $g(x_{t+1}) \le \epsilon + O(1/\sqrt{T}) =
O(\frac{1}{T^{1/6}})$. We will also show that if $g(x_t)\ge \epsilon$,
then $g(x_{t+1})\le g(x_t)$. It follows then by induction that if
$g(x_1)<\epsilon$, then $g(x_t) = O(\frac{1}{T^{1/6}})$ for all
$t$. We prove these
inequalities in three cases. Since $g(x_{t+1})\le g(y_t)$, it suffices
to bound $g(y_t)$. 

\textbf{Case 1: $g(x_t) \le 0$.} In this case, the inequality for
$g(y_t)$, \eqref{eq:simpleTaylor}, becomes 
\begin{equation}
g(y_t) \le g(x_t)+C\eta \le C\eta = O(\frac{1}{\sqrt{T}})
\end{equation}

\textbf{Case 2: $0<g(x_t) < \epsilon$.} Since $[g(x_t)]_+=g(x_t)$, the bound on $g(y_t)$ becomes
\begin{equation}
  \label{eq:smallPosBound}
g(y_t) \le g\Big(x_t-\frac{g(x_t)}{\sigma} \nabla_x g(x_t)\Big) + C\eta
\end{equation}
We will bound  the right using standard methods from gradient descent proofs.
Since $g$ is convex and $\nabla_x g(x)$ has Lipschitz constant, $L$,
we have the inequality:
\begin{equation}
  \label{eq:convexLipschitzBound}
  g(y)\le g(x) + \nabla_x g(x)^T (y-x)+\frac{L}{2}\|y-x\|^2  
\end{equation}
for all $x$ and $y$ \cite{nesterov2013introductory}.

Recall that
$\epsilon = O(\frac{1}{T^{1/6}})$. Assume that $T$ is sufficiently
large so that $\frac{Lg(x_t)}{2\sigma}<\frac{L\epsilon}{2\sigma} < 1$. 
Applying
\eqref{eq:convexLipschitzBound} with $x=x_t$ and
$y=x_t-\frac{g(x_t)}{\sigma}\nabla_x g(x_t)$ gives
\begin{align}
  g(y_t) & \le  g\Big(x_t-\frac{[g(x_t)]_+}{\sigma} \partial_x
           (g(x_t))\Big) + C\eta \\ 
  \label{eq:specialLipschitzBound}
  &\le
                                                               g(x_t)
                                                               -
                                                               \frac{g(x_t)}{\sigma}(1-\frac{Lg(x_t)}{2\sigma})\left\|\nabla_x
                                                               g(x_t)
                                                               \right\|^2
  + C\eta \\
  &\le g(x_t) + C\eta = O(\frac{1}{T^{1/6}}). 
\end{align}
where the third bound follows since $1-\frac{Lg(x_t)}{2\sigma} >
0$.

\textbf{Case 3: $g(x_t)\ge \epsilon$. }
A case can arise such that $g(x_{t-1})<\epsilon$ but an additive term of order
$O(\frac{1}{T^{1/2}})$ leads to $\epsilon \le g(x_{t})\le \epsilon +
C\eta = O(\frac{1}{T^{1/6}})$. We will now show that no further
increases are possible by bounding the final two terms of
\eqref{eq:specialLipschitzBound} as
\begin{equation}
  \label{eq:equivBound}
                                         -
                                                               \frac{g(x_t)}{\sigma}(1-\frac{Lg(x_t)}{2\sigma})\left\|\nabla_x
                                                               g(x_t)
                                                               \right\|^2
  + C\eta \le 0 \iff C\eta \le  \frac{g(x_t)}{\sigma}(1-\frac{Lg(x_t)}{2\sigma})\left\|\nabla_x
                                                               g(x_t)
                                                               \right\|^2.
\end{equation}
Now, we
lower-bound the terms on the right of \eqref{eq:equivBound}. Since $\epsilon +
C\eta=O(\frac{1}{T^{1/6}})$, we have that for sufficiently large $T$,
$1-\frac{Lg(x_t)}{2\sigma}\ge 1-\frac{L(\epsilon+C\eta)}{2\sigma}\ge
\frac{1}{2}$.  Further note that by convexity, $g(0)\ge
g(x_t)-\nabla_x g(x_t)^T x_t$. Since we assume that $0$ is feasible,
we have that
\begin{equation*}
  \epsilon \le g(x_t) \le \nabla_x g(x_t)^T x_t \le \|\nabla_x
  g(x_t)\| \|x_t\| \le \|\nabla_x g(x_t)\| R.
\end{equation*}
The final inequality follows since $x_t\in\mathcal{B}$. Thus, we have
the following bound for the right of \eqref{eq:equivBound}:
\begin{equation*}
  \frac{g(x_t)}{\sigma}(1-\frac{Lg(x_t)}{2\sigma})\left\|\nabla_x
    g(x_t)\right\|^2
    \ge \frac{\epsilon^3}{2\sigma R^2} = C\eta.
  \end{equation*}
  The final equality follows by the definition of $\epsilon$. 
\end{proof}

\section{Proof of Theorem 2}

\begin{proof}
For the strongly convex case of $f_t(x)$ with strong convexity parameter equal to $H_1$,
we can also conclude that the modified augmented Lagrangian function in Eq.(\ref{eq::update_strongly_convex})
is also strongly convex w.r.t. $x$ with the strong convexity parameter $H\ge H_1$.
Then we have 
\begin{equation}
\label{inequal::strong_lag}
\begin{array}{rl}
\mathcal{L}_t(x^*,\lambda_t)-\mathcal{L}_t(x_t,\lambda_t) &\ge 
\partial_x\mathcal{L}_t(x_t)^T(x^*-x_t)\\
& + \frac{H_1}{2}\left\|x^*-x_t\right\|^2
\end{array}
\end{equation}

From concavity of $\mathcal{L}$ in terms of $\lambda$, we can have
\begin{equation}
\label{inequal::strong_lambda}
\mathcal{L}_t(x_t,\lambda) - \mathcal{L}_t(x_t,\lambda_t)\le (\lambda-\lambda_t)^T\nabla_{\lambda}\mathcal{L}_t(x_t,\lambda_t)
\end{equation}
Since $\lambda_t$ maximizes the augmented Lagrangian, we can see that the right hand side is $0$.

From Eq.(\ref{eq::x_projection_inequality}), we have
\begin{equation}
\label{inequal::strong_proj}
\begin{array}{rl}
\partial_x\mathcal{L}_t(x_t)^T(x_t-x^*) \le& \frac{1}{2\eta_t}\Big(\left\|x^*-x_t\right\|^2-\left\|x^*-x_{t+1}\right\|^2\Big)\\
& +\frac{\eta_t}{2}(m+1)G^2(1+\left\|\lambda_t\right\|^2)

\end{array}
\end{equation}

Multiply Eq.(\ref{inequal::strong_lag}) by $-1$ and add Eq.(\ref{inequal::strong_lambda}) together with Eq.(\ref{inequal::strong_proj}) plugging in:
\begin{equation}
\begin{array}{l}
\mathcal{L}_t(x_t,\lambda) - \mathcal{L}_t(x^*,\lambda_t) \le \frac{1}{2\eta_t}\Big(\left\|x^*-x_t\right\|^2-\left\|x^*-x_{t+1}\right\|^2\Big)\\
\quad +\frac{\eta_t}{2}(m+1)G^2(1+\left\|\lambda_t\right\|^2)-\frac{H_1}{2}\left\|x^*-x_t\right\|^2
\end{array}
\end{equation}

Let $b_t=\left\|x^*-x_t\right\|^2$, and plug in the expression for $\mathcal{L}_t$, we can get:
\begin{equation}
\begin{array}{l}
f_t(x_t) - f_t(x^*) +\lambda^T[g(x_t)]_+-\frac{\theta_t}{2}\left\|\lambda\right\|^2 \le \frac{1}{2\eta_t}(b_t-b_{t+1})\\
\quad -\frac{H_1}{2}b_t +\frac{(m+1)G^2}{2}\eta_t+\frac{(m+1)G^2}{2}\left\|\lambda_t\right\|^2(\eta_t-\frac{\theta_t}{(m+1)G^2})
\end{array}
\end{equation}

Plug in the expressions $\eta_t = \frac{1}{H_1(t+1)}$, $\theta_t = (m+1)G^2\eta_t$, and sum over $t=1$ to $T$:
\begin{equation}
\begin{array}{l}
\sum\limits_{t=1}^T\Big(f_t(x_t)-f_t(x^*)\Big) + \lambda^T\Big(\sum\limits_{t=1}^T[g(x_t)]_+\Big)
-\frac{\left\|\lambda\right\|^2}{2}\sum\limits_{t=1}^T\theta_t \\
\le \underbrace{\frac{1}{2}\sum\limits_{t=1}^T\Big(\frac{b_t-b_{t+1}}{\eta_t}-\frac{H_1}{2}b_t\Big)}_A + \underbrace{\frac{(m+1)G^2}{2}\sum\limits_{t=1}^T\eta_t}_B
\end{array}
\end{equation}

For the expression of $A$, we have:
\begin{equation}
\begin{array}{l}
A = \frac{1}{2}\Big[\frac{b_1}{\eta_1}+\sum\limits_{t=2}^Tb_t(\frac{1}{\eta_t}-\frac{1}{\eta_{t-1}}-H_1)-\frac{b_{T+1}}{\eta_T}-H_1b_1\Big] \\
\quad \le b_1H_1
\end{array}
\end{equation}

For the expression of $B$, with the expression of $\eta_t$ and the inequality relation between sum and integral, we have:
\begin{equation}
B\le \frac{(m+1)G^2}{2H_1}\log(T)
\end{equation}

Thus, we have:
\begin{equation}
\begin{array}{l}
\sum\limits_{t=1}^T\Big(f_t(x_t)-f_t(x^*)\Big) + \lambda^T\Big(\sum\limits_{t=1}^T[g(x_t)]_+\Big)
-\frac{\left\|\lambda\right\|^2}{2}\sum\limits_{t=1}^T\theta_t \\
\le O(\log(T))
\end{array}
\end{equation}

If we set $\lambda = \frac{\sum\limits_{t=1}^T[g(x_t)]_+}{\sum\limits_{t=1}^T\theta_t}$, 
and due to non-negativity of $\frac{\Big\|\sum\limits_{t=1}^T[g(x_t)]_+\Big\|^2}{2\sum\limits_{t=1}^T\theta_t}$,
we can have
\begin{equation}
\sum\limits_{t=1}^T\Big(f_t(x_t)-f_t(x^*)\Big) \le O(\log(T))
\end{equation}

Furthermore, we have $\sum\limits_{t=1}^T\Big(f_t(x_t) - f_t(x^*)\Big)\ge -FT$
according to the assumption.
Then we have
\begin{equation}
\frac{\Big\|\sum\limits_{t=1}^T[g(x_t)]_+\Big\|^2}{2\sum\limits_{t=1}^T\theta_t} \le O(\log(T)) + FT
\end{equation}

Because $\sum\limits_{t=1}^T\theta_t\le (m+1)G^2\log(T)/H_1$, we have:
\begin{equation}
\big\|\sum\limits_{t=1}^T[g(x_t)]_+\big\| \le O(\sqrt{\log(T)T})
\end{equation}

\end{proof}

\section{Proof of the Propositions}

Now we give the proofs for all the remaining Propositions.

\begin{proof}[Proof of the Proposition \ref{prop::bound_step_max}]
From the construction of $\bar{g}(x)$, we have the 
$\bar{g}(x)\ge \max\limits_{i}g_i(x)$. Thus, if we can upper bound the $\bar{g}(x)$,
$g_i(x)$ will automatically be upper bounded.
In order to use Lemma \ref{lem:bound_step}, we need to make sure
the following conditions are satisfied:
\begin{itemize}
\item $\bar{g}(x)$ is convex and differentiable. 
\item $\left\|\nabla_x \bar{g}(x)\right\|$ is upper bounded. 
\item $\left\|\nabla_x^{\prime\prime} \bar{g}(x)\right\|_2$ is upper bounded.
\end{itemize}
The first condition is satisfied due to the formula of $\bar{g}(x)$.
To examine the second one, we have 
\begin{equation}
\nabla_x \bar{g}(x) = \frac{1}{\sum\limits_{i=1}^m \exp g_i(x)}\Bigg[\sum\limits_{i=1}^m \exp g_i(x) \nabla_x g_i(x)\Bigg]
\end{equation}
\begin{equation}
\begin{array}{rl}
\left\|\nabla_x \bar{g}(x)\right\|^2 &= \frac{1}{\big(\sum\limits_{i=1}^m \exp g_i(x)\big)^2}
\left\|\sum\limits_{i=1}^m \exp g_i(x) \nabla_x g_i(x)\right\|^2 \\
& \le \frac{m\sum\limits_{i=1}^m (\exp g_i(x))^2\left\|\nabla_x g_i(x)\right\|^2 }{\big(\sum\limits_{i=1}^m \exp g_i(x)\big)^2}  \\
& \le m G^2
\end{array}
\end{equation}
Thus, $\left\|\nabla_x \bar{g}(x)\right\| \le \sqrt{m}G$ and the second condition is satisfied.

For $\left\|\nabla_x^{\prime\prime} \bar{g}(x)\right\|_2$, we have 
\begin{equation}
\begin{array}{ll}
\nabla_x^{\prime\prime} \bar{g}(x) =& \underbrace{\frac{1}{\sum\limits_{i=1}^m \exp g_i(x)}
\Bigg[\sum\limits_{i=1}^m \exp g_i(x) \nabla_x^{\prime\prime} g_i(x) + \exp g_i(x)\nabla_x g_i(x)\nabla_x g_i(x)^T\Bigg]}_A \\
& - \underbrace{\frac{1}{\sum\limits_{i=1}^m \exp g_i(x)}\Big(\sum\limits_{i=1}^m \exp g_i(x)\nabla_x g_i(x)\Big)\Big(\sum\limits_{i=1}^m \exp g_i(x)\nabla_x g_i(x)^T\Big)}_B
\end{array}
\end{equation}
To upper bound $\left\|\nabla_x^{\prime\prime} \bar{g}(x)\right\|_2$, which is 
\begin{equation}
\max\limits_{u^Tu = 1} u^T\nabla_x^{\prime\prime} \bar{g}(x)u = \max\limits_{u^Tu=1}u^TAu - u^TBu\le \max\limits_{u^Tu=1}u^TAu
\end{equation}
where the inequality is due to the fact that $B\succeq 0$.

Thus, we have $\left\|\nabla_x^{\prime\prime} \bar{g}(x)\right\|_2\le \left\|A\right\|_2$.
For the $\left\|A\right\|_2$, we have 
\begin{equation}
\begin{array}{ll}
\left\|A\right\|_2 =& \max\limits_{u^Tu = 1} u^TAu \le \frac{1}{\sum\limits_{i=1}^m \exp g_i(x)}\Big(\sum\limits_{i=1}^m\max\limits_{u^Tu = 1}
\exp g_i(x)u^T\nabla_x^{\prime\prime}g_i(x)u\Big) \\
&+\frac{1}{\sum\limits_{i=1}^m \exp g_i(x)}\Big(\sum\limits_{i=1}^m\max\limits_{u^Tu = 1}
\exp g_i(x)\left\|\nabla_x g_i(x)^Tu\right\|^2\Big) \\
&\le \frac{1}{\sum\limits_{i=1}^m \exp g_i(x)}\Big(\sum\limits_{i=1}^m \exp g_i(x)(L_i + \left\|\nabla_x g_i(x)\right\|^2)\Big)\\
&\le \frac{1}{\sum\limits_{i=1}^m \exp g_i(x)}\Big(\sum\limits_{i=1}^m \exp g_i(x)\Big)(\bar{L} + G^2) = \bar{L} + G^2
\end{array}
\end{equation}
where the first inequality comes from the optimality definition, the second inequality comes from the upper bound for each 
$\left\|\nabla_x^{\prime\prime}g_i(x)\right\|_2$ and the Cauchy - Schwartz inequality,
and the last inequality comes from the fact that $\bar{L} = \max L_i$ and $\left\|\nabla_x g_i(x)\right\|$ is upper bounded by $G$.
Thus, the last condition is also satisfied.
\end{proof}

\begin{proof}[Proof of the Proposition \ref{prop::similarResultTo2012}]
From Theorem \ref{thm::sumOfSquareLongterm}, we know that $\sum\limits_{t=1}^T\Big([g_i(x_t)]_+\Big)^2 \le O(\sqrt{T})$.
By using the inequality $(y_1+y_2+...+y_n)^2\le n(y_1^2+y_2^2+...+y_n^2)$,
setting $y_i$ being equal to $[g_i(x_t)]_+$, and $n = T$, we  have 
$\Big(\sum\limits_{t=1}^T[g_i(x_t)]_+\Big)^2 \le T\sum\limits_{t=1}^T\Big([g_i(x_t)]_+\Big)^2 \le O(T^{3/2})$.
Then we obtain that $\sum\limits_{t=1}^T[g_i(x_t)]_+ \le O(T^{3/4})$. 
Because $g_i(x_t)\le [g_i(x_t)]_+$, we also have $g_i(x_t) \le O(T^{3/4})$.
\end{proof}

\begin{proof}[Proof of the Proposition \ref{prop::tradeOffLossAndConstraint}]

Since we only change the stepsize for Algorithm \ref{alg::alg1}, 
the previous result in Lemma \ref{lem:sumOfLagfunction} and part of the proof 
up to Eq.(\ref{eq::UpperBoundOfsumOfObjAndLongTermConstrain}) in Theorem \ref{thm::sumOfSquareLongterm}
can be used without any changes.

First, let us rewrite the Eq.(\ref{eq::UpperBoundOfsumOfObjAndLongTermConstrain}):
\begin{equation}
\label{eq::key_our}
\small
\begin{array}{rl}
\sum\limits_{t=1}^T\Big(f_t(x_t) - f_t(x^*)\Big) & +
\sum\limits_{i=1}^m\sum\limits_{t=1}^T\frac{([g_i(x_t)]_+)^2}{\sigma\eta}\Big(1-\frac{(m+1)G^2}{2\sigma}\Big) \\
&\le \frac{R^2}{2\eta}+\frac{\eta T}{2}(m+1)G^2

\end{array}
\end{equation}

By plugging in the definition of $\alpha$, $\eta$, and that $\frac{([g_i(x_t)]_+)^2}{\sigma\eta}\alpha\ge 0$,
we have
\begin{equation}
\begin{array}{ll}
\sum\limits_{t=1}^T\Big(f_t(x_t)-f_t(x^*)\Big)&\le \frac{R^2}{2}T^{\beta}+\frac{(m+1)G^2}{2}T^{1-\beta} \\
                                              &=O(T^{max\{\beta,1-\beta\}}) 
\end{array}
\end{equation}

As argued in the proof of Theorem \ref{thm::sumOfSquareLongterm},
we have the following inequality with the help of
$\sum\limits_{t=1}^T\Big(f_t(x_t) - f_t(x^*)\Big)\ge -FT$:
\begin{equation}
\label{eq::constrain_ineq_our}
\begin{array}{l}
\sum\limits_{i=1}^m\sum\limits_{t=1}^T\frac{([g_i(x_t)]_+)^2}{\sigma\eta}\alpha 
\le \frac{R^2}{2}T^{\beta} + \frac{(m+1)G^2}{2}T^{1-\beta}+FT \\
\sum\limits_{t=1}^T([g_i(x_t)]_+)^2\le \frac{\sigma}{\alpha}(\frac{R^2}{2}+\frac{(m+1)G^2}{2}T^{1-2\beta}+FT^{1-\beta})

\end{array}
\end{equation}

Then we have 
\begin{equation}
\begin{array}{l}
\sum\limits_{t=1}^T[g_i(x_t)]_+\le\sqrt{T\sum\limits_{t=1}^T\Big([g_i(x_t)]_+\Big)^2} \\
\le\sqrt{\frac{T\sigma}{\alpha}\Big(\frac{R^2}{2}+\frac{(m+1)G^2}{2}T^{1-2\beta}+FT^{1-\beta}\Big)}\\
=O(T^{1-\beta/2})
\end{array}
\end{equation}
\end{proof}

It is also interesting to figure out why \cite{mahdavi2012trading} 
cannot have this user-defined trade-off benefit.
From \cite{mahdavi2012trading}, the key inequality in obtaining their conclusions is:
\begin{equation}
\label{eq::key_2012}
\begin{array}{l}
\sum\limits_{t=1}^T \Big(f_t(x_t)-f_t(x^*)\Big)+
\sum\limits_{i=1}^m\frac{\Big[\sum\limits_{t=1}^Tg_i(x_t)\Big]_+^2}{2(\sigma\eta T+m/\eta)}\\
\le \frac{R^2}{2\eta} + \frac{\eta T}{2}\Big((m+1)G^2+2mD^2\Big)
\end{array}
\end{equation} 
The main difference between Eq.(\ref{eq::key_2012}) and Eq.(\ref{eq::key_our})
is in the denominator of $\frac{\Big[\sum\limits_{t=1}^Tg_i(x_t)\Big]_+^2}{2(\sigma\eta T+m/\eta)}$.
Eq.(\ref{eq::key_2012}) has the form $(\sigma\eta T+m/\eta)$, 
while Eq.(\ref{eq::key_our}) has the form $(\sigma\eta)$. 
The coupled $\eta$ and $1/\eta$ prevents Eq.(\ref{eq::key_2012}) from arriving this user-defined trade-off.

The next proofs of the Proposition \ref{prop::true_violation_bound_2011} and \ref{prop::true_violation_bound_2016}
show how we can use our proposed Lagrangian function in Eq.(\ref{eq::new long term lagrangian})
to make the algorithms in \cite{mahdavi2012trading} and \cite{jenatton2016adaptive}
to have the clipped long-term constraint violation bounds.

\begin{sproof}[Proof of the Proposition \ref{prop::true_violation_bound_2011}]

If we look into the proof of Lemma 2 and Proposition 3 in \cite{mahdavi2012trading}, 
the new Lagrangian formula does not lead to any difference,
which means that the $\mathcal{L}_t(x,\lambda)$ defined in Eq.(\ref{eq::new long term lagrangian})
is also valid for the drawn conclusions.
Then in the proof of Theorem 4 in \cite{mahdavi2012trading}, we can change $g_i(x_t)$
to $[g_i(x_t)]_+$. 
The maximization for $\lambda$ over the range $[0,+\infty)$ is also valid, 
since $[g_i(x_t)]_+$ automatically satisfies this requirement.
Thus, the claimed bounds hold.
\end{sproof}

\begin{sproof}[Proof of the Proposition \ref{prop::true_violation_bound_2016}]

The previous augmented Lagrangian formula $\mathcal{L}_t(x,\lambda)$ used in \cite{jenatton2016adaptive} is:
\begin{equation}
\label{eq::old_lag_2016}
\mathcal{L}_t(x,\lambda) = f_t(x)+\lambda g(x) - \frac{\theta_t}{2}\lambda^2
\end{equation}
The Lemma 1 in \cite{jenatton2016adaptive} is the upper bound of $\mathcal{L}_t(x_t,\lambda) - \mathcal{L}_t(x_t,\lambda_t)$.
The proof does not make any difference between formula (\ref{eq::old_lag_2016}) and (\ref{eq::new_lag_2016}).
So we can still have the same conclusion of Lemma 1.
The Lemma 2 in \cite{jenatton2016adaptive} is the lower bound of $\mathcal{L}_t(x_t,\lambda) - \mathcal{L}_t(x^*,\lambda_t)$.
Since it only uses the fact that $g(x^*)\le 0$, which is also true for $[g(x^*)]_+$,
we can have the same result with $g(x_t)$ being replaced with $[g(x_t)]_+$.
The Lemma 3 in \cite{jenatton2016adaptive} is free of $\mathcal{L}_t(x,\lambda)$ formula, so it is also true for the new formula.
The Lemma 4 in \cite{jenatton2016adaptive} is the result of Lemma 1-3, 
so it is also valid if we change $g(x_t)$ to $[g(x_t)]_+$.
Then the conclusion of Theorem 1 in \cite{jenatton2016adaptive} is valid for $[g(x_t)]_+$ as well.
\end{sproof}

\end{document}